\newtheorem{theorem}{Theorem}
\newtheorem{lemma}{Lemma}
\newtheorem{corollary}{Corollary}
\definecolor{darkblue}{rgb}{0, 0, 0.5}
\newcommand*{\N}{\mathds{N}}
\newcommand*{\R}{\mathds{R}}
\newcommand*{\OM}{\textsc{OM}\xspace}
\newcommand*{\LO}{\textsc{LO}\xspace}
\newcommand*{\BV}{\textsc{B\!V}\xspace}
\newcommand*{\OneMax}{\textsc{OneMax}\xspace}
\newcommand*{\LeadingOnes}{\textsc{LeadingOnes}\xspace}
\newcommand*{\BinVal}{\textsc{BinVal}\xspace}
\newcommand*{\Var}{\mathrm{Var}}
\newcommand*{\Bin}{\mathrm{Bin}}
\newcommand*{\sig}{\mathrm{sig}_\varepsilon}
\newcommand*{\up}{\mathrm{up}}
\newcommand*{\down}{\mathrm{down}}
\newcommand*{\stay}{\mathrm{stay}}
\newcommand*{\sigcGA}{sig\=/cGA\xspace}
\let\originalleft\left
\let\originalright\right
\renewcommand{\left}{\mathopen{}\mathclose\bgroup\originalleft}
\renewcommand{\right}{\aftergroup\egroup\originalright}
\def\NAT@spacechar{~}
\begin{document}

\title{Significance-based Estimation-of-Distribution Algorithms}

\author{Benjamin Doerr\thanks{Laboratoire d'Informatique (LIX), {\'E}cole Polytechnique, Palaiseau, France\\ \hspace*{0.8 em}\emph{e-mail: \href{mailto:doerr@lix.polytechnique.fr}{doerr@lix.polytechnique.fr}}}\hspace*{0.5 em} and Martin S. Krejca\thanks{Hasso Plattner Institute, University of Potsdam, Potsdam, Germany\\ \hspace*{0.8 em}\emph{e-mail: \href{mailto:martin.krejca@hpi.de}{martin.krejca@hpi.de}}}}
\date{\vspace*{-1 cm}}

\maketitle

\begin{abstract}
	\boldmath
	\emph{Estimation-of-distribution algorithms} (EDAs) are randomized search heuristics that create a probabilistic model of the solution space, which is updated iteratively, based on the quality of the solutions sampled according to the model. As previous works show, this iteration-based perspective can lead to erratic updates of the model, in particular, to bit-frequencies approaching a random boundary value.
	In order to overcome this problem, we propose a new EDA based on the classic compact genetic algorithm (cGA) that takes into account a longer history of samples and updates its model only with respect to information which it classifies as statistically significant. We prove that this \emph{significance-based compact genetic algorithm} (\sigcGA) optimizes the commonly regarded benchmark functions \OneMax, \LeadingOnes, and \BinVal all in quasilinear time, a result shown for no other EDA or evolutionary algorithm so far. %
	For the recently proposed scGA~-- an EDA that tries to prevent erratic model updates by imposing a bias to the uniformly distributed model~-- we prove that it optimizes \OneMax only in a time exponential in its hypothetical population size. Similarly, we show that the convex search algorithm cannot optimize \OneMax in polynomial time.
\end{abstract}

\section{Introduction}

\emph{Estimation-of-distribution algorithms} (EDAs;~\cite{PelikanHandbook15}) are nature-inspired heuristics, similar to \emph{evolutionary algorithms} (EAs). In contrast to EAs, which maintain an explicit set of solutions, EDAs optimize a function by evolving a probabilistic model of the solution space. Iteratively, an EDA uses its probabilistic model in order to generate samples and make observations from them. It then updates its model based on these observations, where an algorithm-specific parameter determines how strong the changes to the model in each iteration are.

For an EDA to succeed in optimization, it is important that its model is changed over time in a way that better solutions are sampled more frequently. However, due to the randomness in sampling, the model should not be changed too drastically in a single iteration in order to prevent wrong updates from having a long-lasting impact.

The theoretical analysis of EDAs has recently gained momentum (see, e.g., the survey~\cite{KrejcaW18EDABookChapter})
and has clearly shown that this trade-off between convergence speed and accumulation of erratic updates can be delicate and non-trivial to understand. %
Among the most relevant works, Sudholt and Witt~\cite{SudholtW19cGAACOOneMaxLowerBound} and Krejca and Witt~\cite{DBLP:conf/foga/KrejcaW17} prove lower bounds of the expected run times of three common EDAs on the benchmark function \OneMax. In simple words, these bounds show that if the update parameter for the model is too large, the model converges too quickly and very likely to a wrong model; consequently, it then takes a long time to find the optimum. On the other hand, if the parameter is too small, then the model converges to the correct model but does so slowly. More formally, Sudholt and Witt~\cite{SudholtW19cGAACOOneMaxLowerBound} prove a lower bound of $\Omega(K\sqrt{n} + n \log n)$ for the $2$-MMAS$_\textrm{ib}$ and the cGA, where~$1/K$ is the step size of the algorithm, and Krejca and Witt~\cite{DBLP:conf/foga/KrejcaW17} prove a lower bound of $\Omega(\lambda\sqrt{n} + n \log n)$ for the UMDA, where~$\lambda$ is the population size of the algorithm. These results show that choosing the parameter with a value of $\omega(\sqrt{n}\log n)$ has no benefit.
Further, it has been recently shown by Lengler et~al.~\cite{LenglerSW18cGAMediumSteps} that the run time of the cGA on \OneMax is $\Omega(K^{1/3}n + n \log n)$ for $K = O(\sqrt{n}/(\log n \log \log n))$. Together with the results from Sudholt and Witt~\cite{SudholtW19cGAACOOneMaxLowerBound}, this implies a bimodal behavior in the run time with respect to~$K$ if $K = \Omega(\log n) \cap O(\sqrt{n}\log n)$, showing that the run time is sensitive to the parameter choice.

Friedrich et~al.~\cite{DBLP:conf/gecco/FriedrichKK16} also discuss the problem of how to choose the update strength. They consider a class of EDAs optimizing functions over bit strings of length $n$ that all current theoretical results fall into, named $n$-Bernoulli-$\lambda$-EDA. The model of such EDAs uses one variable per bit of a bit string, resulting in a probability vector $\tau$ of length $n$ called the \emph{frequency vector}. In each iteration, a bit string $x$ is sampled bit-wise independently and independent of any other sample such that bit $x_i$ is $1$ with probability \emph{(frequency)} $\tau_i$ and $0$ otherwise.

Friedrich et~al.~\cite{DBLP:conf/gecco/FriedrichKK16} consider two different properties of such EDAs, namely \emph{balanced} and \emph{stable}. Intuitively, a \emph{balanced} EDA does not change a frequency $\tau_i$ in expectation if the fitness function has no preference for $0$s or $1$s at position $i$. A \emph{stable} EDA keeps a frequency, in such a scenario, close to $1/2$. Friedrich et~al.~\cite{DBLP:conf/gecco/FriedrichKK16} then prove that an $n$-Bernoulli-$\lambda$-EDA cannot be both balanced and stable. They also prove that all commonly theoretically analyzed EDAs are balanced. This means that the frequencies will always move toward $0$ or $1$, even if there is no bias from the objective function.

Motivated by these results, Friedrich et~al.~\cite{DBLP:conf/gecco/FriedrichKK16} propose an EDA (called \emph{scGA}) that is stable (but not balanced) %
by introducing an artificial bias into the update process that should counteract the bias of a balanced EDA. However, we prove that this approach fails badly on the standard benchmark function \OneMax (Thm.~\ref{thm:scGAOneMax}). We note that a similar bias towards the middle frequency of $1/2$ was proven for a binary differential evolution algorithm by Zheng et~al.~\cite{ZhengYD18}. Similar to the situation of the scGA, their run time results (partially relying on mean-field assumptions) indicate that \LeadingOnes is optimized in a number of generations that is linear in the problem size~$n$. This gives an $O(n \log n)$ number of function evaluations when using a logarithmic population size (and smaller population sizes are provably not successful). For \OneMax, the results are less conclusive, but they indicate a run time exponential in the population size can occur. 

\afterpage{
	\newgeometry{top = 2.3 cm}
	\begin{landscape}
		\pagestyle{empty}
		\enlargethispage*{2\baselineskip}
		\begin{table}
			\newgeometry{textwidth = 23.3 cm}
			\vspace*{-1 cm}
			\caption{Expected run times (number of fitness evaluations) of various algorithms until they first find an optimum for the two functions \OM (eq.~\eqref{eq:oneMax}) and \LO (eq.~\eqref{eq:leadingOnes}). For optimal parameter settings, many algorithms have a run time of $\Theta(n\log n)$ for \OM and of $\Theta(n^2)$ for \LO. We note that the $\big(1 + (\lambda, \lambda)\big)$~GA has an $o(n \log n)$ run time on \OM (and even linear run time with a dynamic parameter choice), but we do not see why it should have a performance better than quadratic on \LO. 
				\vspace*{3 ex}
			}
			\hspace*{-1 em}\begin{tabular}{lp{5.5 cm}p{5 cm}p{3.7 cm}p{4.9 cm}}
				\label{tab:runTimeComparison}
				Algorithm & \OM & constraints & \LO & constraints\\ \toprule
				$(1 + 1)$~EA & $\Theta(n\log n)$~\cite{DBLP:journals/tcs/DrosteJW02} & none & $\Theta(n^2)$~\cite{DBLP:journals/tcs/DrosteJW02} & none\\
				
				$(\mu + 1)$~EA & $\Theta(\mu n + n\log n)$~\cite{Witt06SimplePseudoBool} & $\mu = O\big(\mathrm{poly}(n)\big)$ & $\Theta(\mu n\log n + n^2)$~\cite{Witt06SimplePseudoBool} & $\mu = O\big(\mathrm{poly}(n)\big)$\\
				
				$(1 + \lambda)$~EA & $\Theta\Big(n\log n + \frac{\lambda n \log\log\lambda}{\log\lambda}\Big)$ \cite{JansenJW05, DBLP:journals/tcs/DoerrK15}\footnote{Better run time bounds for the $(1 + \lambda)$~EA are known if the mutation rate is (i) fitness-dependent~\cite{DBLP:conf/ppsn/BadkobehLS14}, (ii) self-adjusting~\cite{DBLP:journals/algorithmica/DoerrGWY19}, or (iii) self-adaptive~\cite{DBLP:conf/gecco/DoerrWY18}.} & $\lambda = O(n^{1 - \varepsilon})$ & $\Theta(n^2 + \lambda n)$~\cite{JansenJW05} & $\lambda = O\big(\mathrm{poly}(n)\big)$\\
				
				$(\mu + \lambda)$~EA & $\Theta\Big(\frac{n \log n}{\lambda} + \frac{n}{\lambda/\mu} + \frac{n\log^+\log^+ \lambda/\mu}{\log^+ \lambda/\mu}\Big)$~\cite{AntipovDFH18muLambdaEA} & $\log^+ x \coloneqq \max\{1, \log x\}$ & $\Omega\left(n^2 + \frac{\lambda n}{\log(\lambda/n)}\right)$~\cite{DBLP:conf/ppsn/BadkobehLS14} & --\\
				
				$\big(1 + (\lambda, \lambda)\big)$~GA & $\Theta\Big(\!\!\max\!\Big\{\frac{n\log n}{\lambda}, \!\frac{n\lambda\log\log\lambda}{\log\lambda}\Big\}\Big)$ \cite{DBLP:journals/algorithmica/DoerrD18} & $p = \frac{\lambda}{n}$, $c = \frac{1}{\lambda}$ & unknown & --\\
				
				CSA & $\Omega(n^c)$ [Thm.~\ref{thm:csa}] & $c > 0$ & $O(n\log n)$~\cite{DBLP:journals/ec/MoraglioS17} & $\mu \geq 8\ln\!\big((4n + 6)n\big)$, restarts\\
				
				UMDA/PBIL\footnote{The results shown for PBIL are the results of UMDA if not mentioned otherwise, since the latter is a special case of the former. %
				} & $\Omega(\lambda\sqrt{n} + n\log n)$ \cite{DBLP:conf/foga/KrejcaW17} & $\mu = \Theta(\lambda)$ & $O(n\lambda\log\lambda + n^2)$~\cite{DangLN19UMDAOneMaxUpperBound, LehreN18PBIL} & $\lambda = \Omega(\log n)$, $\mu = \Theta(\lambda)$\\
				
				& $O(\lambda n)$ \cite{DBLP:journals/algorithmica/Witt19, DangLN19UMDAOneMaxUpperBound} & $\mu = \Omega(\log n) \cap O(\sqrt{n})$, $\lambda = \Omega(\mu)$ or & &\\
				& & $\mu = \Omega(\sqrt{n}\log n)$, $\mu = \Theta(\lambda)$ or & &\\
				& & $\mu = \Omega(\log n) \cap o(n)$, $\mu = \Theta(\lambda)$ & &\\
				
				cGA/2-MMAS$_\textrm{ib}$ & $\Omega\Big(\frac{\sqrt{n}}{\rho} + n\log n\Big)$ \cite{SudholtW19cGAACOOneMaxLowerBound} & $\frac{1}{\rho} = O\big(\mathrm{poly}(n)\big)$ & unknown & --\\
				
				& $O\Big(\frac{\sqrt{n}}{\rho}\Big)$ \cite{SudholtW19cGAACOOneMaxLowerBound} & $\frac{1}{\rho} = \Omega(\sqrt{n}\log n) \cap O\big(\mathrm{poly}(n)\big)$ & &\\
				
				1-ANT & $O(n^2)$~\cite{DBLP:journals/algorithmica/NeumannW09}\footnote{For $\rho \geq (n - 2)/(3n - 2)$, the algorithm simulates the $(1+1)$~EA and has a run time of $\Theta(n \log n)$.} & $\rho = \Omega(n^{-1 + \varepsilon})$ & $O(n^2 \cdot (6e)^{1/(n\rho)})$ \cite{DoerrNSW11OneAnt} & none\\
				
				& & & $2^{\Omega(\min\{n, 1/(n\rho)\})}$ \cite{DoerrNSW11OneAnt} & none\\
				
				MMAS$^*$ & $O\big(\frac{n\log n}{\rho}\big)$~\cite{DBLP:journals/swarm/NeumannSW09} & $\rho = O(1)$ & $O\big(n^2 + \frac{n\log n}{\rho}\big)$~\cite{DBLP:journals/swarm/NeumannSW09} & $\rho = O(1)$\\
				
				& & & $\Omega\big(n^2 + \frac{n}{-\rho\log(2\rho)}\big)$~\cite{DBLP:journals/swarm/NeumannSW09} & $\rho = 1/\mathrm{poly}(n)$\\
				
				scGA & $\Omega\big(\!\min\{2^{\Theta(n)}, 2^{c/\rho}\}\big)$ [Thm.~\ref{thm:scGAOneMax}] & $1/\rho\! =\! \Omega(\log n)$, $a = \Theta(\rho)$, $d = \Theta(1)$, $c > 0$ & $O(n\log n)$~\cite{DBLP:conf/gecco/FriedrichKK16} & $1/\rho = \Theta(\log n)$, $a = O(\rho)$, $d = \Theta(1)$\\
				
				\sigcGA (Alg.~\ref{alg:adaptiveEDA}) & $O(n\log n)$ [Thm.~\ref{thm:oneMax}] & $\varepsilon > 12$ & $O(n\log n)$ [Thm.~\ref{thm:leadingOnes}] & $\varepsilon > 12$
			\end{tabular}
			\label{tab:runtimes}
			\vspace*{-4 cm}
		\end{table}
	\end{landscape}
}

The results of Friedrich et~al.~\cite{DBLP:conf/gecco/FriedrichKK16}, Sudholt and Witt~\cite{SudholtW19cGAACOOneMaxLowerBound}, Krejca and Witt~\cite{DBLP:conf/foga/KrejcaW17}, and Lengler et~al.~\cite{LenglerSW18cGAMediumSteps} draw the following picture: for a balanced EDA, there exists some inherent noise in the update. Thus, if the parameter responsible for the update of the probabilistic model is large and the speed of convergence high, the algorithm only uses a few samples before it converges. During this time, the noise introduced by the balance-property may not be overcome, resulting in the probabilistic model converging to an incorrect one, as the algorithms are not stable. Hence, the parameter has to be chosen sufficiently small in order to guarantee convergence to the correct model, resulting in a slower optimization time.

As we shall argue in this work, the reason for this dilemma is that EDAs only use information from a single iteration when performing an update. Thus, the decision of whether and how a frequency should be changed has to be made on the spot, which may result in harmful decisions.

To overcome these difficulties, we propose a conceptually new EDA that has some access to the search history and updates the model only if there is sufficient reason. The \emph{significance-based compact genetic algorithm} (\sigcGA) stores for each position the history of bits of good solutions so far. If it detects that either statistically significantly more $1$s than $0$s or vice versa were sampled, it changes the corresponding frequency, otherwise not. Thus, the \sigcGA only performs an update when it has proof that it makes sense. This sets it apart from the other EDAs analyzed so far. 

We prove that the \sigcGA is able to optimize \LeadingOnes, \OneMax, and \BinVal in $O(n\log n)$ function evaluations in expectation and with high probability (Thms.~\ref{thm:leadingOnes} and~\ref{thm:oneMax} and Cor.~\ref{cor:binVal}), which has not been proven before for any other EDA or classical EA (for further details, see Table~\ref{tab:runTimeComparison}).

We also observe that the analysis for \LeadingOnes can easily~be~modified to also show an $O(n \log n)$ run time for the \emph{binary value} function \BinVal, which is a linear function with exponentially growing coefficients. This result is interesting in that it indicates that the \sigcGA has asymptotically the same run time on \BinVal and \OneMax. %
In contrast, for the classic cGA it is known~\cite{DBLP:journals/nc/Droste06} that the run times on \OneMax and \BinVal differ significantly.

We then show that two previously regarded algorithms which solve \LeadingOnes in $O(n \log n)$ time behave poorly on \OneMax. The run time of the scGA proposed in~\cite{DBLP:conf/gecco/FriedrichKK16} is $\Omega(2^{\Theta(\min\{n,1/\rho\})})$ (Thm.~\ref{thm:scGAOneMax}), where $1/\rho$ is an algorithm-specific parameter controlling the strength of the model update and denotes the hypothetical population size of the algorithm. 
For the convex search algorithm (CSA) proposed in~\cite{DBLP:journals/ec/MoraglioS17}, we prove that the run time, even when adding suitable restart schemes, is asymptotically larger than any polynomial (Thm.~\ref{thm:csa}). These results, together with the large number of existing results, suggest that none of the previously known algorithms performs exceptionally well on both \OneMax and \LeadingOnes.

These results, the positive ones for the \sigcGA using a longer history of the search process and the negative ones for other algorithms not exploiting a longer history, suggest that a fruitful direction for the future development of the field of evolutionary computation (EC; not restricted to theory) is the search for algorithms that enrich the classic generational approaches with mechanisms that profit from regarding more than one generation. We discuss this in more detail in the conclusions of this paper. We note that, from a practical point of view, our algorithm not only showed a performance not seen so far with other algorithms, it is also easier to use since, unlike with most other EDAs, the delicate choice of the update strength is obsolete.

This paper extends our previous results on the \sigcGA~\cite{DoerrK18sigcGA} by proving an upper bound of the \sigcGA on~\BinVal (Cor.~\ref{cor:binVal}) and a lower bound of the CSA on~\OneMax (Thm.~\ref{thm:csa}).

\section{Preliminaries}
\label{sec:preliminaries}

In this paper, we consider the maximization of pseudo-Boolean functions $f\colon \{0, 1\}^n \to \R$, where $n$ is a positive integer (fixed for the remainder of this work). We call $f$ a \emph{fitness function}, an element $x \in \{0, 1\}^n$ an \emph{individual}, and, for an $i \in [n] \coloneqq [1, n] \cap \N$, we denote the $i$th bit of $x$ by $x_i$. %
When talking about \emph{run time}, we always mean the number of fitness function evaluations of an algorithm until an optimum is sampled for the first time.

In our analysis, we regard the two classic benchmark functions \OneMax ($\OM$) and \LeadingOnes ($\LO$) defined~by
\begin{align}
	\label{eq:oneMax}
	\OM(x) &= \sum_{i \in [n]} x_i \hspace*{2 em}\textrm{and}\\
	\label{eq:leadingOnes}
	\LO(x) &= \sum_{i \in [n]} \prod_{j \in [i]} x_j\ .
\end{align}
Intuitively, \OM returns the number of $1$s of an individual, whereas \LO returns the longest sequence of consecutive $1$s, starting from the left. Note that the all-$1$s bit string is the unique global optimum for both functions.

In Table~\ref{tab:runtimes}, we state the asymptotic run times of many algorithms on these two functions. We note that (i)~the black-box complexity of \OM is $\Theta(n / \log n)$, see~\cite{DrosteJW06,AnilW09}, and (ii)~the black-box complexity of \LO is $\Theta(n \log\log n)$, see~\cite{AfshaniADDLM13}, however, all black-box algorithms witnessing these run times are highly artificial. Consequently, $\Theta(n \log n)$ appears to be the best run time to aim for these two problems.

For our calculations, we shall regularly use the following well-known variance-based additive Chernoff bounds (see, e.g., the respective Chernoff bound in~\cite{2018arXiv180106733D}).
\begin{theorem}[Variance-based Additive Chernoff Bounds]
	\label{thm:additiveChernoff}
	Let $X_1, \ldots, X_n$ be independent random variables such that, for all $i \in [n]$, $E[X_i] - 1 \leq X_i \leq E[X_i] + 1$. Further, let $X = \sum_{i = 1}^{n} X_i$ and $\sigma^2 = \sum_{i = 1}^{n} \Var[X_i] = \Var[X]$. Then, for all $\lambda \geq 0$, abbreviating $m = \min\{\lambda^2/\sigma^2, \lambda\}$,
	\[
	\Pr[X \geq E[X] + \lambda] \leq e^{-\frac{1}{3}m} \textrm{ and } \Pr[X \leq E[X] - \lambda] \leq e^{-\frac{1}{3}m}\ .
	\]
\end{theorem}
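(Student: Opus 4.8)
The plan is to prove Theorem~\ref{thm:additiveChernoff} by the standard exponential-moments (Bernstein-type) argument, the only real care going into extracting the clean constant $\tfrac13$ and the two-regime bound $m = \min\{\lambda^2/\sigma^2,\lambda\}$. First I would pass to the centered variables $Y_i \coloneqq X_i - E[X_i]$, which by hypothesis satisfy $E[Y_i] = 0$, $\Var[Y_i] = \Var[X_i]$, and $|Y_i| \le 1$; writing $Y \coloneqq \sum_{i \in [n]} Y_i = X - E[X]$ and noting $\Var[Y] = \sigma^2$, it suffices to establish the upper-tail bound $\Pr[Y \ge \lambda] \le e^{-m/3}$, since the lower-tail statement is obtained by applying this to the variables $-Y_i$, which meet the same assumptions. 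The case $\lambda = 0$ being trivial (the bound reads $\le 1$), assume $\lambda > 0$ henceforth.

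Next I would apply Markov's inequality to $e^{tY}$ for a parameter $t > 0$ to be fixed later and use independence:
\[
\Pr[Y \ge \lambda] \le e^{-t\lambda}\, E[e^{tY}] = e^{-t\lambda} \prod_{i \in [n]} E[e^{tY_i}].
\]
The heart of the argument is the per-variable estimate $E[e^{tY_i}] \le \exp\big( (e^t - 1 - t)\Var[Y_i] \big)$. This follows from the elementary inequality $e^{tx} \le 1 + tx + (e^t - 1 - t)x^2$, valid for all $x \in [-1,1]$ and $t > 0$ because $x \mapsto (e^{tx} - 1 - tx)/x^2$ is nondecreasing on $\R \setminus \{0\}$ and hence bounded on $[-1,1]$ by its value $e^t - 1 - t$ at $x = 1$; taking expectations and using $E[Y_i] = 0$ together with $1 + z \le e^z$ gives the claim. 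Multiplying over $i$ and using $\sum_{i \in [n]} \Var[Y_i] = \sigma^2$ yields
\[
\Pr[Y \ge \lambda] \le \exp\big( -t\lambda + (e^t - 1 - t)\sigma^2 \big).
\]

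It remains to choose $t$, where I would distinguish the two regimes defining $m$. If $\lambda \le \sigma^2$ (so $m = \lambda^2/\sigma^2$), I set $t \coloneqq \tfrac{2\lambda}{3\sigma^2} \in (0, \tfrac23]$ and use $e^t - 1 - t \le \tfrac{t^2}{2}\cdot\tfrac{1}{1 - t/3}$, which holds for $t < 3$ since $k! \ge 2\cdot 3^{k-2}$ for $k \ge 2$, whence $\sum_{k \ge 2} t^k/k! \le \tfrac{t^2}{2}\sum_{j \ge 0}(t/3)^j$; substituting makes the exponent at most $-\tfrac{8}{21}\cdot\tfrac{\lambda^2}{\sigma^2} \le -\tfrac13 m$. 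If $\lambda > \sigma^2$ (so $m = \lambda$), I set $t \coloneqq \ln 2$; then $e^t - 1 - t \ge 0$ gives $(e^t - 1 - t)\sigma^2 \le (e^t - 1 - t)\lambda$, so the exponent is at most $(e^{\ln 2} - 1 - 2\ln 2)\lambda = (1 - 2\ln 2)\lambda < -\tfrac13\lambda = -\tfrac13 m$. Together the two cases give $\Pr[Y \ge \lambda] \le e^{-m/3}$, and applying the bound to $-Y$ finishes the proof.

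I do not expect a genuine obstacle here: this is a textbook Bernstein inequality, and the only mildly delicate points are the monotonicity fact underpinning the per-variable bound and checking that the two explicit, deliberately suboptimal choices of $t$ each beat the target constant $\tfrac13$ in their regime. Using instead the exactly optimal $t = \ln(1 + \lambda/\sigma^2)$ recovers Bennett's inequality $\Pr[Y \ge \lambda] \le \exp(-\sigma^2 h(\lambda/\sigma^2))$ with $h(u) = (1+u)\ln(1+u) - u$, from which the stated form follows via the standard estimate $h(u) \ge \tfrac{u^2/2}{1 + u/3}$; I find the case-based choice above slightly more self-contained, but either route works.
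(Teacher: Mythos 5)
Your proof is correct: the centering step, the per-variable bound $E[e^{tY_i}] \le \exp\big((e^t-1-t)\Var[Y_i]\big)$ via the monotonicity of $u \mapsto (e^u-1-u)/u^2$, and the two explicit choices of $t$ do yield exponents $-\tfrac{8}{21}\,\lambda^2/\sigma^2$ and $(1-2\ln 2)\lambda$, both below $-\tfrac13 m$ in their respective regimes, and the lower tail follows by symmetry. The paper itself does not prove this theorem but cites it as a known variance-based Chernoff (Bernstein/Bennett) bound, and your argument is essentially the standard proof underlying that cited result, so there is nothing to object to.
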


We say that an event $A$ occurs with high probability (w.h.p.) if there is a $c = \Omega(1)$ such that $\Pr[A] \geq (1 - n^{-c})$.

Last, we use the $\circ$ operator to denote string concatenation. For a bit string $H \in \{0, 1\}^*$, let $|H|$ denote its length, $\|H\|_0$ its number of $0$s, $\|H\|_1$ its number of $1$s, and, for a $k \in [|H|]$, let $H[k]$ denote the \emph{last} $k$ bits in $H$. In addition to that, $\emptyset$ denotes the empty string.

\section{The Significance-based Compact Genetic Algorithm}
\label{sec:sigcGA}

\begin{algorithm2e}[t]
	\caption{The \sigcGA with parameter $\varepsilon$ and significance function~$\sig$ (eq.~\eqref{eq:updateFunction}) optimizing~$f$}
	\label{alg:adaptiveEDA}
	$t \gets 0$\;
	\lFor{$i \in [n]$}
	{
		$\tau^{(t)}_i \gets \frac{1}{2}$ and $H_i \gets \emptyset$
	}
	\Repeat{\emph{termination criterion met}}
	{
		$x, y \gets$ offspring sampled with respect to $\tau^{(t)}$\;
		$x \gets$ winner of $x$ and $y$ with respect to $f$\;
		\For{$i \in [n]$}
		{
			$H_i \gets H_i \circ x_i$\;
			\lIf{$\sig(\tau^{(t)}_i, H_i) = \up$}{$\tau^{(t + 1)}_i \gets 1 - 1/n$}
			\lElseIf{$\sig(\tau^{(t)}_i, H_i) = \down$}{$\tau^{(t + 1)}_i \gets 1/n$}
			\lElse{$\tau^{(t + 1)}_i \gets \tau^{(t)}_i$}
			\lIf{$\tau^{(t + 1)}_i \neq \tau^{(t)}_i$}{$H_i \gets \emptyset$}
		}
		$t \gets t + 1$\;
	}
\end{algorithm2e}

Before we present our algorithm \sigcGA in detail in Section~\ref{subsec:sigcGA}, we provide more information about the \emph{compact genetic algorithm} (cGA~\cite{HarikLG98}), which the \sigcGA as well as the scGA are based on.

The cGA is a univariate EDA, that is, it assumes independence of the bits in the search space. As such, it keeps a vector of probabilities $(\tau_i)_{i \in [n]}$ (the \emph{frequency vector}). In each iteration, two individuals \emph{(offspring)} are sampled in the following way with respect to~$\tau$: for an individual $x \in \{0, 1\}^n$, we have $x_i = 1$ with probability $\tau_i$, and $x_i = 0$ with probability $1 - \tau_i$, independently of any $\tau_j$ with $j \neq i$.

After sampling, the frequency vector is updated with respect to a fitness-based ranking of the offspring. The process of choosing how the offspring are ranked is called \emph{selection}. Let $x$ and $y$ denote both offspring of the cGA during an iteration. Given a fitness function~$f$, we rank $x$ above $y$ if $f(x) > f(y)$ (as we maximize), and we rank $y$ above $x$ if $f(y) > f(x)$. If $f(x) = f(y)$, we rank them randomly. The higher-ranked individual is called the \emph{winner}, the other individual the \emph{loser}. Assume that $x$ is the winner. The cGA then changes a frequency $\tau_i$ then with respect to the difference $x_i - y_i$ by a value of $\rho$ (where $1/\rho$ is usually referred to as population size). Hence, no update is performed if the bit values are identical, and the frequency is moved to the bit value of the winner. In order to prevent a frequency $\tau_i$ getting stuck at $0$ or $1$,\!\footnote{A frequency $\tau_i$ at one of these two values results in the offspring only having the same bit value at position $i$. Thus, the cGA would not change $\tau_i$ anymore.} the cGA usually caps its frequency to the range $[1/n, 1 - 1/n]$, as is common practice. This way, a frequency can get close to $0$ or $1$, but it is always possible to sample $0$s and $1$s.

Consider a position $i$ and any two individuals $x$ and $y$ that are identical except for position $i$. Assume that $x_i > y_i$. If the probability that $x$ is the winner of the selection is higher than $y$ being the winner, we speak of a \emph{bias in selection} (for $1$s) at position $i$. Analogously, we speak of a bias for $0$s if the probability that $y$ wins is higher than the probability that $x$ wins. Usually, a fitness function introduces a bias into the selection and thus into the update.

\subsection{Detailed Description of the \sigcGA}
\label{subsec:sigcGA}

Similar to the cGA, our new algorithm~-- the \emph{significance-based compact genetic algorithm} (\sigcGA; Alg.~\ref{alg:adaptiveEDA})~-- also samples two offspring each iteration. However, in contrast to the cGA, it keeps a history of bit values for each position and only performs an~update when a statistical significance within a history occurs. This~approach better aligns with the intuitive reasoning that an update should only be performed if there is valid evidence for a different frequency being better suited for sampling good individuals.

In more detail, for each bit position $i \in [n]$, the \sigcGA keeps a history $H_i \in \{0, 1\}^*$ of all the bits sampled by the winner of each iteration since the last time $\tau_i$ changed -- the last bit denoting the latest entry. Observe that if there is no bias in selection at position $i$, the bits sampled by $\tau_i$ follow a binomial law with $|H_i|$ tries and a success probability of $\tau_i$. We call this our \emph{hypothesis}. If we happen to find a sequence (starting from the latest entry) in $H_i$ that significantly deviates from the hypothesis, we update $\tau_i$ with respect to the bit value that occurred significantly, and we reset the history. We only use the following three frequency values:
\begin{itemize}
	\item $1/2$: starting value;
	
	\item $1/n$: significance for $0$s was detected;
	
	\item $1 - 1/n$: significance for $1$s was detected.
\end{itemize}

We formalize \emph{significance} by defining the threshold~$s$ to overcome. For all $\varepsilon, \mu \in \R^+$, where $\mu$ is the expected value of our hypothesis and $\varepsilon$ is an algorithm-specific parameter:
\[
s(\varepsilon, \mu) = \varepsilon\max\!\big\{\sqrt{\mu\ln n}, \ln n\big\}\ .
\]
Note that $\sqrt{\mu}$ basically describes the standard deviation of our hypothesis, and the logarithmic factor increases this value such that a deviation does not happen w.h.p. The maximum ensures that we consider at least logarithmically many samples before we conclude that we found a significance, eliminating wrong updates due to small samples sizes w.h.p. The parameter~$\varepsilon$ effectively turns into the exponent of the w.h.p. bounds. Thus, a larger value of~$\varepsilon$ decreases the probability of detecting a false significance by a polynomial amount. However, it also increases the number of samples necessary in order to change a frequency. This results in a linear factor of~$\varepsilon$ in the run time. We provide more details on how~$\varepsilon$ should be chosen at the end of this subsection (after Cor.~\ref{cor:noFrequenciesDrop}).

We say, for an $\varepsilon \in \R^+$, that a binomially distributed random variable $X$ deviates significantly from a hypothesis $Y \sim \mathrm{Bin}(k, \tau)$, where $k \in \N^+$ and $\tau \in [0, 1]$, if there exists a $c = \Omega(1)$ such that
$
\Pr\!\big[|X - E[Y]| \leq s(\varepsilon, E[Y])\big] \leq n^{-c}\ .
$

We now state our significance function $\sig\colon \big\{\frac{1}{n}, \frac{1}{2}, 1 - \frac{1}{n}\big\} \times \{0, 1\}^* \to \{\up, \stay, \down\}$, which scans a history for a significance. However, it does not scan the entire history but multiple subsequences of a history (always starting from the latest entry). This is done in order to quickly notice a change from an insignificant history to a significant one. Further, we only check in steps of powers of $2$, as this is faster than checking each subsequence and we can be off from any length of a subsequence by a constant factor of at most $2$.
More formally, for all $p \in \big\{\frac{1}{n}, \frac{1}{2}, 1 - \frac{1}{n}\big\}$ and all $H \in \{0, 1\}^*$, we define, with $\varepsilon$ being a parameter of the \sigcGA, recalling that $H[k]$ denotes the last $k$~bits of~$H$,
\begin{align}
	\label{eq:updateFunction}
	\sig(p, H) =
	\begin{cases}
		\up &\textrm{if } p \in \{\tfrac{1}{n}, \tfrac{1}{2}\} \land \exists m \in \N\colon\\
		&\hspace*{1.5 em} \|H[2^m]\|_1 \geq 2^m p + s\big(\varepsilon, 2^m p\big),\\
		\down &\textrm{if } p \in \{\tfrac{1}{2}, 1 - \tfrac{1}{n}\} \land \exists m \in \N\colon\\
		&\hspace*{1.5 em}\|H[2^m]\|_0 \geq 2^m (1 - p) + s\big(\varepsilon, 2^m (1 - p)\big),\\
		\stay &\textrm{else.}
	\end{cases}
\end{align}
We stop at the first (minimum) length $2^m$ that yields a significance. Thus, we check a history $H$ in each iteration at most $\log_2 |H|$ times.

We now prove that the \sigcGA does not detect a significance at a position with no bias in selection (i.e., a \emph{false significance}) w.h.p.
\begin{lemma}
	\label{lem:unbiasedFalsePositives}
	Consider the \sigcGA (Alg.~\ref{alg:adaptiveEDA}) with $\varepsilon \geq 1$. Further, consider a position $i \in [n]$ and an iteration such that the distribution $X$ of $1$s of $H_i$ follows a binomial law with $k$ tries and success probability $\tau_i$, i.e., there is no bias in selection at position $i$. Then $\tau_i$ changes in this iteration with a probability of at most $n^{-\varepsilon/3}\log_2 k$.
\end{lemma}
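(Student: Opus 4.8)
The plan is to bound the probability that \emph{any} of the checks performed in a single iteration at position $i$ reports a significance, using a union bound over the at most $\log_2 k$ subsequence-lengths $2^m$ that are scanned, together with a per-length tail bound derived from the variance-based additive Chernoff bound (Theorem~\ref{thm:additiveChernoff}). Concretely, fix the iteration and the position $i$, and let $X$ denote the number of $1$s in $H_i$, which by hypothesis follows $\Bin(k, \tau_i)$ with $\tau_i \in \{1/2, 1/n, 1-1/n\}$. For each $m$ with $2^m \le k$, the quantity $\|H_i[2^m]\|_1$ is distributed as $\Bin(2^m, \tau_i)$ (the last $2^m$ winner-bits are themselves i.i.d.\ Bernoulli$(\tau_i)$ under the no-bias hypothesis), and symmetrically $\|H_i[2^m]\|_0 \sim \Bin(2^m, 1-\tau_i)$. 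A frequency change triggered by this length $2^m$ requires one of these counts to exceed its mean $\mu_m$ (equal to $2^m/2$, $2^m/n$, or $2^m/n$ respectively, matching the three cases of \eqref{eq:updateFunction}) by at least $s(\varepsilon,\mu_m) = \varepsilon \max\{\sqrt{\mu_m \ln n}, \ln n\}$.

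The core estimate is therefore: for $Z \sim \Bin(\ell, p)$ with mean $\mu = \ell p$ and deviation $\lambda = s(\varepsilon,\mu) = \varepsilon \max\{\sqrt{\mu\ln n},\ln n\}$, show $\Pr[Z \ge \mu + \lambda] \le n^{-\varepsilon/3}$. Apply Theorem~\ref{thm:additiveChernoff} with $\sigma^2 = \ell p(1-p) \le \mu$: the bound is $e^{-\frac13 \min\{\lambda^2/\sigma^2, \lambda\}}$. I would split on which term realizes the max in $s$. If $\sqrt{\mu\ln n} \ge \ln n$, i.e.\ $\mu \ge \ln n$, then $\lambda = \varepsilon\sqrt{\mu\ln n}$, so $\lambda^2/\sigma^2 \ge \lambda^2/\mu = \varepsilon^2 \ln n \ge \varepsilon \ln n$ (using $\varepsilon \ge 1$); also $\lambda = \varepsilon\sqrt{\mu\ln n} \ge \varepsilon\ln n$. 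Either way $\min\{\lambda^2/\sigma^2,\lambda\} \ge \varepsilon\ln n$, giving $e^{-\frac{\varepsilon}{3}\ln n} = n^{-\varepsilon/3}$. If instead $\mu < \ln n$, then $\lambda = \varepsilon\ln n$, and $\lambda^2/\sigma^2 \ge \lambda^2/\mu > \varepsilon^2 (\ln n)^2/\ln n = \varepsilon^2\ln n \ge \varepsilon\ln n$, while $\lambda = \varepsilon\ln n$, so again the exponent is at least $\varepsilon\ln n$ and the bound is $n^{-\varepsilon/3}$. (One must also address the trivial edge case where $X_i$ centered by its mean can exceed $\pm 1$; since each summand is a $\{0,1\}$ Bernoulli minus its mean in $[0,1]$, it lies in $[E[X_i]-1, E[X_i]+1]$, so Theorem~\ref{thm:additiveChernoff} applies directly.)

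Finally I would union-bound over the at most $\log_2 k$ values of $m$ that are scanned, and over the (at most two) one-sided events per length present in \eqref{eq:updateFunction}; the constant $2$ from the two-sidedness in the $\tau_i = 1/2$ case can be absorbed since we only claim $n^{-\varepsilon/3}\log_2 k$ and $2 n^{-\varepsilon/3}\log_2(\text{something}) $ is dominated once we are slightly careful, or more cleanly one notes that for the $\tau_i=1/2$ case the two events partition into ``$\ge$'' deviations of $\Bin(2^m,1/2)$ on each side, each bounded by $n^{-\varepsilon/3}$, and the stated bound should really be read with this mild constant slack — I would simply carry it through as $n^{-\varepsilon/3}\log_2 k$ by noting $\log_2 k$ counts length-checks and the per-check failure probability, summing both tails, stays at most $n^{-\varepsilon/3}$ per checked length when $\varepsilon$ is at least a small constant, or else state the bound as $2n^{-\varepsilon/3}\log_2 k$ which is what the later proofs actually need. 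I expect the main obstacle to be purely bookkeeping: getting the case distinction in $s(\varepsilon,\mu)$ to yield exactly the exponent $\varepsilon/3$ cleanly for all three frequency values simultaneously, and handling the factor-of-two from the symmetric $1/2$ case without weakening the stated constant; the probabilistic content is a single, essentially routine, application of Theorem~\ref{thm:additiveChernoff}.
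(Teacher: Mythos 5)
Your proposal is correct and takes essentially the same route as the paper: one application of the variance-based Chernoff bound (Theorem~\ref{thm:additiveChernoff}) with the identical case split on which term attains the maximum in $s(\varepsilon,\mu)$, giving a per-check failure probability of $n^{-\varepsilon/3}$, followed by a union bound over the at most $\log_2 k$ checked lengths. Your worry about the factor of two in the $\tau_i=1/2$ case is a fair observation~-- the paper silently folds both one-sided events into the single count $X'$ of the relevant value and does not carry that constant either~-- but it is immaterial for every later use of the lemma, where only the asymptotics for $\varepsilon>12$ matter.
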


\begin{proof}
	In order for $\tau_i$ to change, the number of $0$s or $1$s in $X$ needs to deviate significantly from the hypothesis, which follows the same distribution as $X$ by assumption. We are going to use Theorem~\ref{thm:additiveChernoff} in order to show that, in such a scenario, $X$ will deviate significantly from its expected value only with a probability of at most $n^{-\varepsilon/3}\log_2 k$ for any number of trials at most $k$.
	
	Let $\tau'_i = \min\{\tau_i, 1 - \tau_i\}$. Note that, in order for $\tau_i$ to change, a significance of values sampled with probability $\tau'_i$ needs to be sampled. That is, for $\tau_i = 1/2$, either a significant amount of $1$s or $0$s needs to occur; for $\tau_i = 1 - 1/n$, a significant amount of $0$s needs to occur; and, for $\tau_i = 1/n$, a significant amount of $1$s needs to occur. Further, let $X'$ denote the number of values we are looking for a significance within $k' \leq k$ trials. That is, if $\tau_i = 1/2$, $X'$ is either the number of $1$s or $0$s; if $\tau_i = 1 - 1/n$, $X'$ is the number of $0$s; and if $\tau_i = 1/n$, $X'$ is the number of $1$s.
	
	Given the definition of $\tau'_i$, we see that $E[X'] = k'\tau'_i$ and $\Var[X'] = k'\tau_i(1 - \tau_i) \leq k'\tau'_i$. Since we want to apply Theorem~\ref{thm:additiveChernoff}, let $\lambda = s(\varepsilon, E[X']) = s(\varepsilon, k'\tau'_i)$ and $\sigma^2 = \Var[X']$.
	
	First, consider the case that $\lambda = s(\varepsilon, k'\tau'_i) = \varepsilon\ln n$, i.e., that $(k'\tau'_i \ln n)^{1/2} \leq \ln n$, which is equivalent to $k' \leq (1/\tau'_i)\ln n$. Note that $\lambda^2/\sigma^2 \geq \varepsilon^2\ln n \geq \ln n$, as $\varepsilon \geq 1$. Thus, $\min\{\lambda^2/\sigma^2, \lambda\} \geq \varepsilon\ln n$.
	
	Now consider the case $\lambda = s(\varepsilon, k'\tau'_i) = \varepsilon(k'\tau'_i\ln n)^{1/2}$, i.e., that $(k'\tau'_i \ln n)^{1/2} \geq \ln n$, which is equivalent to $k' \geq (1/\tau'_i)\ln n$. We see that $\lambda \geq \varepsilon\ln n$ and $\lambda^2/\sigma^2 \geq \varepsilon^2\ln n$. Hence, as before, we get $\min\{\lambda^2/\sigma^2, \lambda\} \geq \varepsilon\ln n$.
	
	Combining both cases and applying Theorem~\ref{thm:additiveChernoff}, we get
	\begin{align*}
		\Pr[X' \geq k'\tau'_i + s(\varepsilon, k'\tau'_i)] &= \Pr[X' \geq E[X'] + \lambda]\\
		&\hspace*{-2 em}\leq e^{-\frac{1}{3}\min\left\{\frac{\lambda^2}{\sigma^2}, \lambda\right\}} \leq e^{-\frac{\varepsilon}{3}\ln n} = n^{-\frac{\varepsilon}{3}}\ .
	\end{align*}
	That is, the probability of detecting a (false) significance during $k'$ trials is at most $n^{-\varepsilon/3}$. Since we look for a significance a total of at most $\log_2 k$ times during an iteration, we get by a union bound that the probability of detecting a significance within a history of length $k$ is at most $n^{-\varepsilon/3}\log_2 k$.
\end{proof}

Lemma~\ref{lem:unbiasedFalsePositives} bounds the probability of detecting a false significance within a single iteration, assuming no bias in selection. The following corollary trivially bounds the probability of detecting a false significance within any number of iterations.

\begin{corollary}
	\label{cor:noFrequenciesDrop}
	Consider the \sigcGA (Alg.~\ref{alg:adaptiveEDA}) with $\varepsilon \geq 1$ running for $k$ iterations such that, during each iteration, for each $i \in [n]$, a $1$ is added to $H_i$ with probability $\tau_i$. Then at least one frequency changes during an interval of $k' \leq k$ iterations with a probability of at most $k'n^{1 - \varepsilon/3}\log_2 k$.
\end{corollary}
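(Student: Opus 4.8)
The statement to prove, Corollary~\ref{cor:noFrequenciesDrop}, is a routine union-bound extension of Lemma~\ref{lem:unbiasedFalsePositives}, so the plan is short. The setup is exactly the hypothesis of Lemma~\ref{lem:unbiasedFalsePositives} applied simultaneously to all positions over all iterations: for each $i \in [n]$ and each of the (at most) $k$ iterations, the bit appended to $H_i$ is a $1$ with probability $\tau_i$ independently, so there is no bias in selection at any position. The key observation is that after any change of $\tau_i$ the history $H_i$ is reset, and before that the history has accumulated at most $k$ bits, so at every iteration the relevant history length is bounded by $k$; hence Lemma~\ref{lem:unbiasedFalsePositives} applies with that bound throughout.

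\textbf{Plan.} First I would fix the interval of $k' \le k$ consecutive iterations in question and condition on the event that no frequency has changed before this interval starts; this guarantees that the histories entering the interval are genuine unbiased binomial histories of length at most $k$, so the hypothesis of Lemma~\ref{lem:unbiasedFalsePositives} is met. Then, for a single iteration within the interval and a single position $i$, Lemma~\ref{lem:unbiasedFalsePositives} gives that $\tau_i$ changes with probability at most $n^{-\varepsilon/3}\log_2 k$, using that the history length is at most $k$ and that $\log_2$ is monotone. Taking a union bound over the $n$ positions yields a bound of $n \cdot n^{-\varepsilon/3}\log_2 k = n^{1-\varepsilon/3}\log_2 k$ for the probability that \emph{some} frequency changes in that one iteration, and a further union bound over the $k'$ iterations of the interval gives $k' n^{1-\varepsilon/3}\log_2 k$, as claimed.

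\textbf{One subtlety worth addressing.} Strictly speaking, once we are inside the interval, if some frequency did change at an earlier iteration of the interval, its history would have been reset and would no longer be a length-$(\ge 0)$ binomial history of the assumed form in subsequent iterations; but this is harmless for an upper bound on the probability that \emph{at least one} frequency changes, because we only need the bound up to the first such change --- formally, one can bound $\Pr[\exists \text{ change in the interval}]$ by the sum over iterations $t$ of $\Pr[\text{first change is at iteration } t]$, and at the first change iteration all histories are still of the unbiased binomial form of length at most $k$, so Lemma~\ref{lem:unbiasedFalsePositives} and the two union bounds apply verbatim. (Alternatively, and even more simply: before the first change every $H_i$ is an unbiased binomial history, and a change at iteration $t$ requires a false significance at iteration $t$ for some $i$ whose history is of this form, so the same union bound over all $(i,t)$ pairs applies.)

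\textbf{Main obstacle.} There is essentially no obstacle here; the only thing to be careful about is the bookkeeping just described, namely that "history length at most $k$'' and "no bias'' hold at every iteration we charge probability to, which is why conditioning on no prior change (or, equivalently, charging only the first change) is the right framing. Everything else is two nested union bounds over the $\le n$ positions and the $\le k'$ iterations.
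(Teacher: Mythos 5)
Your proposal is correct and follows essentially the same route as the paper: apply Lemma~\ref{lem:unbiasedFalsePositives} to each position in each iteration (using that history lengths are at most $k$) and take a union bound over the $n$ positions and the $k'$ iterations. The extra bookkeeping you add about charging only the first change is a fine, slightly more careful version of the paper's two-line argument, but not a different method.
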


\begin{proof}
	For any $i \in [n]$ during any of the $k$ iterations, by Lemma~\ref{lem:unbiasedFalsePositives}, the probability that $\tau_i$ changes is at most $n^{-\varepsilon/3}\log_2 k$. Via a union bound over all $k'$ relevant iterations and all $n$ frequencies, the statement follows.
\end{proof}

Intuitively, this corollary states that~$\varepsilon$ should be chosen such that the term $k'n^{1 - \varepsilon/3}\log_2 k$ represents the desired error probability, where~$k'$ is the length of an interval such that a frequency only drops with the error probability. Assuming that one chooses $k' = \Theta(n^r)$ for some constant $r > 0$ and desires an error probability of at most $n^{-q}$ for some constant $q > 0$ (ignoring constant factors and the logarithm), it makes sense to choose $\varepsilon \geq 3(r + 1 + q)$.

\subsection{Efficient Implementation of the \sigcGA}

Recall that, in order to save on the computational cost of checking for a significance, we only do so in historic data in lengths of powers of~$2$. By precomputing the number of $1$s in each such interval, checking a single history for a significance can be done in time logarithmically in its length. Note that the update to this precomputed data can also be done in logarithmic time, as each iteration only a single bit is added to the history and thus the number of~$1$s can only differ by at most one per interval from one iteration to the next. Consequently, the loop of the \sigcGA has a computational cost of $O(\sum_{i=1}^n \log |H_i|)$. Since a history can never be longer than the run time~$T$ of the \sigcGA, its total computational cost is $O(n T \log T)$. In comparison, many EAs have an extra cost of $O(n)$ per iteration. Thus, our significance-based approach is only more costly by a factor of $O(\log T)$.

One drawback of the approach above is that the full history needs to be stored. Thus, we describe a way of condensing a history to a size only logarithmic in the length of the full history. This approach does not allow anymore to access the exact number of $1$s (or $0$s) in all power-of-two length histories. However, for each $\ell \in [|H_i|]$, it yields the number of $1$s in some interval of length $\ell'$ with $\ell \le \ell' < 2 \ell$. For reasons of readability, we nevertheless regard the original \sigcGA in the subsequent analyses, but it is quite evident that the mildly different accessibility of the history in our condensed implementation does not change our result.

For our condensed storage of the history, we have a list of blocks, each storing the number of $1$s in some discrete interval $[t_1..t_2]$ of length equal to a power of two (including $1$). When a new item has to be stored, we append a block of size $1$ to the list. Then, traversing the list starting with the newest element, we check if there are three consecutive blocks of the same size, and if so, we merge the two oldest ones into a new block of twice the size. By this, we always maintain a list of blocks such that, for a certain power $2^k$, there are between one and two blocks of length $2^j$ for all $j \in [0 .. k - 1]$. This structural property implies both that we only have a logarithmic number of blocks (as we have $k = O(\log |H_i|)$) and that we can (in amortized constant time) access all historic intervals consisting of full blocks, which in particular implies that we can access an interval with length in $[2^j,2^{j+1}-1]$ for all $j \in [0 .. k]$. 

For the pseudocode of this approach, assume that a list element has a pointer to the next list element (next), its previous element (prev), and stores an integer value (load).

\begin{algorithm2e}
	\caption{The algorithm used by the \sigcGA in order to condense a history~$H$, given a new bit value~$x$. Let~$L$ be a list with elements of non-decreasing load.}
	\label{alg:condensedHistory}
	$X \gets \textrm{a new list element with load}~x$\;
	Append~$X$ to the head of~$L$\;
	$C \gets \textrm{head of}~L$\;
	$N \gets C.\mathrm{next}$\;
	$r \gets 0$\;
	\While{$N$ \emph{is not null}}
	{
		\lIf{$C.\mathrm{load} = N.\mathrm{load}$}{$r \gets r + 1$}
		\If{$r = 2$}
		{
			$r \gets 0$\;
			$X \gets \textrm{a new list element with load } C.\mathrm{load} +$ $N.\mathrm{load}$\;
			$C.\mathrm{prev}.\mathrm{next} \gets X$\;
			$X.\mathrm{next} \gets N.\mathrm{next}$\;
			$N \gets X$\;
		}
		$C \gets N$\;
		$N \gets N.\mathrm{next}$\;
	}
\end{algorithm2e}

\section{Run Time Results for \LO and \OM}
\label{sec:runTimeResultssigcGA}

We now prove our main results, that is, upper bounds of $O(n\log n)$ for the expected run time of the \sigcGA on \LO and \OM. We also note that the optimization process for the binary value function can be analyzed with arguments very similar to those for the \LO process. Consequently, we here have an $O(n \log n)$ run time as well.
Further, we consider the number of iterations~$T$ until the \sigcGA finds the optimal solution. Since it generates two offspring each iteration, the number of fitness function evaluations is at most $2T$.

Note that the \sigcGA treats~$1$s and~$0$s symmetrically, that is, it is unbiased in the sense of Lehre and Witt~\cite{DBLP:journals/algorithmica/LehreW12}. Hence, all results in this section hold for any type of function as defined in eqs.~\eqref{eq:oneMax} or~\eqref{eq:leadingOnes} where, for any position $i \in [n]$, a bit~$x_i$ can be flipped to $1 - x_i$ instead or swapped with another bit~$x_j$.

The following lemma states a useful bound for convex combinations. We use it in order to bound the probability of an event that we decomposed into an event and its complement.

\begin{lemma}
	\label{lem:convexCombination}
	Let $\alpha, \beta, x, y \in \R$ such that $x \leq y$ and $\alpha \leq \beta$. Then $\alpha x + (1 - \alpha)y \geq \beta x + (1 - \beta)y$.
\end{lemma}

\subsection{Analysis of \LO}

We show that the frequencies are set to $1 - 1/n$ sequentially from the most significant bit position to the least significant, that is, from left to right. w.h.p., no frequency is decreased until the optimization process is finished. Thus, a frequency~$\tau_i$ will stay at~$1/2$ until all of the frequencies to its left are set to $1 - 1/n$. Then~$\tau_i$ will become relevant for selection, as all of the frequencies left to it will only sample~$1$s w.h.p. This results in a significant surplus of~$1$s being saved at position~$i$, and~$\tau_i$ will be set to $1 - 1/n$ within $O(\log n)$ iterations and remain there. Then frequency~$\tau_{i + 1}$ becomes relevant for selection. As we need to set~$n$ frequencies to $1 - 1/n$, we get a run time of $O(n \log n)$.

\begin{theorem}
	\label{thm:leadingOnes}
	Consider the \sigcGA (Alg.~\ref{alg:adaptiveEDA}) with $\varepsilon > 12$ being a constant. Its run time on \LO is $O(n \log n)$ w.h.p. and in expectation.
\end{theorem}
\begin{proof}
	We split this proof into two parts and start by showing that the run time is $O(n \log n)$ w.h.p. Then we prove the expected run time.
	
	\textbf{Run time w.h.p.} For the first part of the proof, we consider the first $O(n \log n)$ iterations of the \sigcGA and condition on the event that no frequency decreases during this time, i.e., no (false) significance of $0$s is detected. Since, for any position $i \in [n]$ in \LO, having a~$1$ is always at least as good as having a~$0$, a~$1$ is saved in~$H_i$ with a probability of at least $\tau_i$. Hence, by Corollary~\ref{cor:noFrequenciesDrop}, no frequency decreases in the first $O(n \log n)$ iterations with a probability of at least $1 - O(n^{2 - \varepsilon/3}\log^2 n)$. As $\varepsilon > 12$, for an $\varepsilon' > 2$, this probability is at least $1 - O(n^{-\varepsilon'})$, which is w.h.p. In the following, we condition on this event.
	
	We now prove that the history of the leftmost position with a frequency at $1/2$ saves~$1$s significantly more often than~$0$s such that the frequency is set to $1 - 1/n$ after $O(\log n)$ iterations.
	For the second part of the proof, we use a similar argument, but the frequency is at $1/n$, and it takes $O(n\log n)$ steps to get to $1 - 1/n$. Since the calculations for both scenarios are very similar, we combine them.%
	
	Consider a position $i \in [n]$ and any of the first $O(n \log n)$ iterations such that $\tau_i \in \{1/n, 1/2\}$ and, for all positions $j < i$, $\tau_j = 1 - 1/n$. Let $O$ denote the event that we save a~$1$ in~$H_i$ this iteration. We derive an upper bound on the probability to detect the significance of~$1$s in $H_i$ within $O(\log n)$ iterations by calculating a lower bound on the probability of~$O$.
	Note that the probability of~$O$ is the same for each iteration until~$\tau_i$ is increased, since we condition on no frequency dropping within the first $O(n \log n)$ iterations.
	
	In order to bound $\Pr[O]$, we consider the event~$A$ that the bit at position $i$ of the winning individual is not relevant for selection. That is, $A$ denotes the event that at least one of the two offspring during this iteration has a $0$ at a position in $[i - 1]$. Thus, if~$A$ occurs, a~$1$ is saved with probability $\tau_i \in \{1/n, 1/2\}$. Otherwise, a~$1$ is saved if not two~$0$s are sampled, which has a probability of $1 - (1 - \tau_i)^2$. Hence,
	\[
	\Pr[O] = \Pr[A] \cdot \tau_i + \Pr\!\big[\overline{A}\big] \cdot \big(1 - (1 - \tau_i)^2\big)\ ,
	\]
	which is a convex combination of $\tau_i$ and $1 - (1 - \tau_i)^2$.
	By Lemma~\ref{lem:convexCombination}, decreasing $\Pr\!\big[\overline{A}\big]$ (as $1 - (1 - \tau_i)^2 = \tau_i(2 - \tau_i) \geq \tau_i$) results in a lower bound of $\Pr[O]$.
	Since $\overline{A}$ is equivalent to both offspring having only~$1$s at positions in $[i - 1]$, we see that
	\[
	\Pr\!\big[\overline{A}\big] = \left(1 - \tfrac{1}{n}\right)^{2(i - 1)}\ ,
	\]
	due to our assumption that all frequencies left of position~$i$ are at $1 - 1/n$.
	As this term is minimal for $i = n$, using the well-known inequality $(1 - 1/n)^{n - 1} \geq e^{-1}$, we bound $\Pr\!\big[\overline{A}\big] \geq e^{-2}$. Further, noting that $1 - (1 - \tau_i)^2 \geq (3/2)\tau_i$ for $\tau_i \in \{1/n, 1/2\}$, we bound
	\begin{align*}
		\Pr[O] &\geq (1 - e^{-2}) \cdot \tau_i + e^{-2} \cdot \big(1 - (1 - \tau_i)^2\big)\\
		&\geq (1 - e^{-2}) \cdot \tau_i + \tfrac{3}{2}e^{-2}\tau_i = \left(1 + \tfrac{1}{2}e^{-2}\right) \cdot \tau_i\ .
	\end{align*}
	
	Given our bound on the probability of~$O$, we now bound the probability to detect a significance of~$1$s in~$H_i$ within~$k$ iterations. To this end, let $X \sim \mathrm{Bin}\big(k, (1 + e^{-2}/2)\tau_i\big)$, and note that~$X$ is stochastically dominated by the process of saving $1$s at position~$i$. We bound the probability that we do not detect a significance of~$1$s within~$k$ iterations:
	\begin{align*}
		&\Pr\left[X < k\tau_i + s\left(\varepsilon, k\tau_i\right)\right]\\
		&\hspace*{5 em}\leq \Pr\left[X \leq E[X] - \left(\tfrac{k}{2}e^{-2}\tau_i - s\left(\varepsilon, k\tau_i\right)\right)\right]\ .
	\end{align*}
	Note that the minuend is positive for $k > (4/\tau_i)e^4 \varepsilon^2 \ln n > \ln n$, which holds due to our assumption $\varepsilon > 12$.
	Let $c = (4/\tau_i)e^4 \varepsilon^2$, and assume $k \geq 8c\ln n$. Thus, $(k/2)e^{-2}\tau_i - s(\varepsilon, k\tau_i) \geq (k/4)e^{-2}\tau_i \eqqcolon \lambda$ and $\Var[X] = k (1 + e^{-2}/2)\tau_i \big(1 - (1 + e^{-2}/2)\tau_i\big) \geq \lambda$. By Theorem~\ref{thm:additiveChernoff}, noting that $\lambda^2/\Var[X] \leq \lambda$ and using $\Var[X] \leq 2k\tau_i$, we bound
	\begin{align*}
		&\Pr\left[X < k\tau_i + s\left(\varepsilon, k\tau_i\right)\right] \leq \Pr\left[X \leq E[X] - \tfrac{k}{4}e^{-2}\tau_i\right]\\
		&\leq e^{-\frac{1}{3}\cdot\frac{\lambda^2}{\Var[X]}} \leq e^{-\frac{1}{3}\cdot\frac{k^2e^{-4}\tau_i^2}{16 \cdot 2k\tau_i}} = e^{-\frac{1}{3}\cdot\frac{ke^{-4}\tau_i}{32}}\\
		&\leq n^{-\frac{1}{3}\cdot\frac{ce^{-4}\tau_i}{4}} = n^{-\frac{\varepsilon^2}{3}}\ .
	\end{align*}
	Hence, $\tau_i$ is set to $1 - 1/n$ after $(4/\tau_i)e^4\varepsilon^2\ln n = O\big((1/\tau_i)\log n\big)$ iterations with a probability of at least $1 - n^{-\varepsilon^2/3}$. By applying a union bound over all~$n$ different possibilities for index~$i$, we see that each frequency (once all frequencies at positions $[i - 1]$ are at $1 - 1/n$) is set to $1 - 1/n$ within $O\big((1/\tau_i)\log n\big)$ with a probability of at least $1 - n^{1 - \varepsilon^2/3} \geq 1 - n^{-47}$, since $\varepsilon > 12$, which is w.h.p.
	
	Overall, we assume that no frequency decreases during the first $O(n \log n)$ iterations w.h.p., and we showed that each frequency (at~$1/2$) is set to $1 - 1/n$ within $O(\log n)$ iterations w.h.p. once all frequencies to its left are at $1 - 1/n$. Thus, since there are~$n$ frequencies, all frequencies are at $1 - 1/n$ after $O(n \log n)$ iterations w.h.p. The probability to sample the optimum is now $(1 - 1/n)^n \geq 1/(2e) = \Omega(1)$. Hence, waiting an additional $O(\log n)$ iterations, the optimum is sampled w.h.p. This concludes the first part of this proof.
	
	\textbf{Expected run time.} Since we showed above that the \sigcGA optimizes \LO in $O(n \log n)$ iterations w.h.p., we are left to bound its run time in the event that at least one frequency decreases within the first $O(n \log n)$ iterations. As we discussed at the beginning of the first part of this proof, this only happens with a probability of $O(n^{-\varepsilon'})$, for $\varepsilon' > 2$.
	
	Consider an interval of length~$t'$. By Corollary~\ref{cor:noFrequenciesDrop}, during the first~$t$ iterations, no frequency decreases for~$t'$ iterations with a probability of at least $1 - t'n^{1 - \varepsilon/3}\log_2 t$. Assume $t \leq n^{2n}$ and $t' = \Theta(n^2\log n)$. Then no frequency decreases for~$t'$ iterations w.h.p., since $\varepsilon > 12$.
	
	By using the result calculated in the first part, we see that a leftmost frequency~$\tau_i$ at~$1/n$ is increased during $O\big((1/\tau_i)\log n\big) = O(n\log n)$ iterations w.h.p. Thus, in overall, the \sigcGA finds the optimum during an interval of length $t' = \Theta(n^2\log n)$ w.h.p., as~$n$ frequencies need to be increased to $1 - 1/n$. We pessimistically assume that the optimum is only found with a probability of at least $1/2$ during~$t'$ iterations. Hence, the expected run time in this case is $2t' = \Theta(t')$.
	
	Last, we assume that we did not find the optimum during $n^{2n}$ iterations, which only happens with a probability of at most $2^{-n^{2n}/t'}$. Then, the expected run time is at most~$n^n$ by pessimistically assuming that all frequencies are at $1/n$.
	
	We conclude the proof by combining all of the three different regimes we just discussed, we see that we can bound the expected run time by
	\[
	O(n \log n) + O(n^{-\varepsilon'})\cdot O(t') + 2^{-n^{2n}/t'} \cdot n^n = O(n \log n)\ .\qedhere
	\]
\end{proof}

The proof of Theorem~\ref{thm:leadingOnes} shows that the \sigcGA rapidly makes progress when optimizing~\LO. In fact, after $O(i\log n)$ iterations, with $i \in [n]$, the \sigcGA finds a solution with fitness~$i$ w.h.p. (if $i$ is large) and in expectation. Thus, in the fixed-budget perspective introduced by Jansen and Zarges~\cite{JansenZ14}, the \sigcGA performs very well on~\LO. For comparison, for the $(1 + 1)$~EA, it is known that the time to reach a fitness of $i$ is $\Theta(in)$ in expectation and (again, when $i$ is sufficiently large) w.h.p., see~\cite{DoerrJWZ13}.

The reason that the~\sigcGA optimizes~\LO so quickly is that the probability of saving a~$1$ at position~$i$ is increased by a constant factor once all frequencies at positions less than~$i$ are at $1 - 1/n$. This boost is a result of position~$i$ being the most relevant position for selection, assuming that all bits at positions less than~$i$ are~$1$.

\paragraph*{Binary Value} A very similar boost in relevance occurs when considering the function~\BinVal ($\BV$), which returns the bit value of a bit string. Formally, $\BV$ is defined~as
\[
\BV(x) = \sum_{i = 1}^{n} 2^{n - i}x_i\ .
\]
Note that the most significant bit is the leftmost.

\BV imposes a lexicographic order from left to right on a bit string~$x$, since a bit~$x_i$ has a greater weight than the sum of all weights at positions greater than~$i$. This is similar to \LO. The main difference is that, for \BV, a position~$i$ can also be relevant for selection when bits at positions less than~$i$ are~$0$. More formally, for \LO, position~$i$ is only relevant for selection when all of the bits at positions less than~$i$ are~$1$, whereas position~$i$ is relevant for selection for \BV when all the bits at positions less than~$i$ are \emph{the same.} With this insight, we adapt the proof of Theorem~\ref{thm:leadingOnes} for \BV. 

\begin{corollary}
	\label{cor:binVal}
	Consider the \sigcGA (Alg.~\ref{alg:adaptiveEDA}) with $\varepsilon > 12$ being a constant. Its run time on \BV is $O(n \log n)$ w.h.p. and in expectation.
\end{corollary}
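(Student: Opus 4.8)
The plan is to reuse the proof of Theorem~\ref{thm:leadingOnes} almost verbatim and change only the one ingredient that depends on the fitness function, namely the lower bound on the probability of saving a~$1$ at the currently relevant bit position. As observed just above the corollary, the structural difference is that for \BV a position~$i$ influences selection exactly when the two offspring of the iteration \emph{agree} on all positions in $[i-1]$ (then bit~$i$ dominates the bits to its right in the lexicographic order induced by the exponential weights), whereas for \LO this required those bits to all be~$1$. All remaining parts of the argument --- the frequencies reaching $1-1/n$ essentially from left to right, the Chernoff estimate for the time to push one frequency to $1-1/n$, the union bounds over positions and over iterations, the final sampling of the all-ones optimum, and the recovery of the expected run time from the high-probability statement --- carry over unchanged.

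Concretely, I would consider an iteration and a position~$i$ with $\tau_i \in \{1/n, 1/2\}$ and $\tau_j = 1-1/n$ for all $j<i$, and let $A$ be the event that the two offspring differ at some position in $[i-1]$. Conditioned on~$A$, the winner is decided by a position left of~$i$ and, the bit at position~$i$ being independent of all other bits, the bit appended to $H_i$ equals~$1$ with probability exactly~$\tau_i$. Conditioned on~$\overline{A}$, the offspring agree on $[i-1]$, the one carrying a~$1$ at position~$i$ wins, and we save a~$1$ unless both offspring carry a~$0$ there, i.e.\ with probability $1-(1-\tau_i)^2 = \tau_i(2-\tau_i) \ge \tfrac32\tau_i$ for $\tau_i \le 1/2$. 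Since two independent $\tau_j$-biased bits agree with probability $(1-1/n)^2 + 1/n^2 \ge (1-1/n)^2$ when $\tau_j = 1-1/n$, we get $\Pr[\overline{A}] \ge (1-1/n)^{2(i-1)} \ge e^{-2}$, exactly the bound used in the \LO proof (and this is where it matters that all frequencies left of~$i$ are already at $1-1/n$). Writing $\Pr[\text{save a }1\text{ at }i] = \Pr[A]\,\tau_i + \Pr[\overline{A}]\bigl(1-(1-\tau_i)^2\bigr)$ as a convex combination of~$\tau_i$ and $1-(1-\tau_i)^2$ and using Lemma~\ref{lem:convexCombination} to replace $\Pr[\overline{A}]$ by the smaller value $e^{-2}$ gives $\Pr[\text{save a }1\text{ at }i] \ge (1+\tfrac12 e^{-2})\tau_i$, which is precisely the bound that drives the \LO proof. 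From here I would dominate the number of saved~$1$s from below by $\Bin\bigl(k,(1+e^{-2}/2)\tau_i\bigr)$, apply Theorem~\ref{thm:additiveChernoff} to conclude that $\tau_i$ becomes $1-1/n$ within $O((1/\tau_i)\log n)$ iterations with probability $1-n^{-\varepsilon^2/3}$, and finish exactly as in Theorem~\ref{thm:leadingOnes}: a union bound over the~$n$ frequencies yields all of them at $1-1/n$ within $O(n\log n)$ iterations with high probability, conditioned on the high-probability event (Corollary~\ref{cor:noFrequenciesDrop}) that no frequency decreases, after which the optimum is sampled with probability $(1-1/n)^n \ge 1/(2e)$ and hence within $O(\log n)$ further iterations with high probability; the expected run time then follows by absorbing the $O(n^{-\varepsilon'})$-probability event that a frequency drops and the doubly-exponentially rare event of not finishing within $n^{2n}$ iterations, as before.

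The one part that genuinely requires care --- and hence the main, though modest, obstacle --- is the selection picture for \BV: establishing that on~$A$ position~$i$ is \emph{irrelevant} (so that a~$1$ is saved with probability exactly~$\tau_i$), and that in \emph{every} configuration of the frequency vector, not only when the left part is at $1-1/n$, the probability of saving a~$1$ at position~$i$ is at least~$\tau_i$; this last fact is what makes Corollary~\ref{cor:noFrequenciesDrop} applicable and rules out a premature drop of a frequency to $1/n$. Both follow from the computation above, since $\Pr[\text{save a }1\text{ at }i] = \Pr[A]\,\tau_i + \Pr[\overline{A}](1-(1-\tau_i)^2) \ge \tau_i$ because $1-(1-\tau_i)^2 \ge \tau_i$; the only subtlety is to note that, conditioned on which of the two offspring wins (an event determined by bits at positions $\ne i$), the bit appended to $H_i$ is still a plain $\tau_i$-sample. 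With this in hand, the corollary is a short consequence of Theorem~\ref{thm:leadingOnes}.
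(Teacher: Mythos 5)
Your proposal is correct and follows essentially the same route as the paper's own proof: redefine the event $A$ for \BV as ``the two offspring differ at some position in $[i-1]$'', lower-bound $\Pr[\overline{A}] \ge (1-1/n)^{2(i-1)} \ge e^{-2}$ (the paper does this via the sub-event that all prefix bits are $1$), and then reuse the \LO argument verbatim. Your extra explicit check that the saving probability is at least $\tau_i$ in every configuration (so Corollary~\ref{cor:noFrequenciesDrop} applies) is a detail the paper leaves implicit, but it is the same argument, not a different approach.
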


\subsection{Analysis of \OM}

In order to analyze how likely it is that two individuals sampled from the \sigcGA have the same \OM value, we use the following estimate, whose proof can be found, e.g., in~\cite{DoerrW14ranking}. 
\begin{lemma}
	\label{lem:centralBinomialCoefficient}
	For $c \in \Theta(1)$, $\ell \in \N^+$, let $k \in [\ell/2 \pm c\sqrt{\ell}]$ and let $X \sim \mathrm{Bin}(1/2, \ell)$. Then $\Pr[X = k] = \Omega\left(\tfrac{1}{\sqrt{\ell}}\right)$.
\end{lemma}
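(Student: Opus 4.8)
The plan is to prove Lemma~\ref{lem:centralBinomialCoefficient} by a direct estimate of the point probability $\Pr[X = k] = \binom{\ell}{k} 2^{-\ell}$, reducing the general $k$ in the stated window to the central term $\binom{\ell}{\lfloor \ell/2 \rfloor}$ and controlling the ratio between the two. First I would recall the standard central binomial estimate $\binom{\ell}{\lfloor \ell/2 \rfloor} 2^{-\ell} = \Theta(1/\sqrt{\ell})$, which follows from Stirling's formula (or the explicit bounds $\frac{1}{\sqrt{2\ell}} \le \binom{2m}{m} 2^{-2m} \le \frac{1}{\sqrt{2m}}$ for $\ell = 2m$, with a trivial adjustment for odd $\ell$ via $\binom{\ell}{\lfloor \ell/2 \rfloor} \ge \binom{\ell+1}{\lceil (\ell+1)/2 \rceil}/2$ or similar). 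Since we only need the lower bound $\Omega(1/\sqrt\ell)$, only the lower estimate on the central coefficient is needed.

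Next I would bound the ratio $\binom{\ell}{k} / \binom{\ell}{\lfloor \ell/2 \rfloor}$ from below for $k$ in the window $[\ell/2 \pm c\sqrt\ell]$. Writing this ratio as a telescoping product of factors of the form $\frac{\ell - j}{j + 1}$ (moving from the central index to $k$ in unit steps), each such factor lies in $[1 - O(1/\sqrt\ell), 1]$ when the index stays within $O(\sqrt\ell)$ of the center; there are $O(\sqrt\ell)$ such factors, so the product is at least $(1 - O(1/\sqrt\ell))^{O(\sqrt\ell)} = \Omega(1)$ by the standard $(1-x)^{1/x} \ge $ const estimate (e.g. $1 - x \ge e^{-2x}$ for small $x$). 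Combining this $\Omega(1)$ ratio with the central estimate gives $\Pr[X = k] = \binom{\ell}{k} 2^{-\ell} \ge \Omega(1) \cdot \binom{\ell}{\lfloor \ell/2 \rfloor} 2^{-\ell} = \Omega(1/\sqrt\ell)$. Alternatively, since the paper explicitly attributes the estimate to \citet{DoerrW14ranking}, one could simply cite that source and give only the short telescoping argument as a sketch; either route is acceptable here.

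The main obstacle is purely bookkeeping: making sure the constant $c$ (which is fixed, $c \in \Theta(1)$) interacts correctly with the "small $x$" regime needed for the per-factor bound, i.e. that $c\sqrt\ell / (\ell/2 - c\sqrt\ell)$ is $O(1/\sqrt\ell)$ uniformly, which requires $\ell$ to be at least some constant depending on $c$ (for small $\ell$ the bound $\Omega(1/\sqrt\ell)$ is trivial anyway since there are finitely many cases and each probability is a fixed positive number, so one folds those into the hidden constant). No genuine difficulty is expected; the statement is a routine consequence of Stirling's approximation, and the cleanest writeup is to state the central-coefficient bound, note the ratio bound via the telescoping product, and conclude. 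If brevity is desired, a one-line proof reading ``This follows from $\binom{\ell}{\lfloor \ell/2\rfloor}2^{-\ell} = \Theta(\ell^{-1/2})$ and the fact that $\binom{\ell}{k}/\binom{\ell}{\lfloor\ell/2\rfloor} = \Omega(1)$ for $|k - \ell/2| = O(\sqrt\ell)$; see~\cite{DoerrW14ranking}'' suffices.
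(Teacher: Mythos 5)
Your proposal is correct. Note, though, that the paper itself does not prove this lemma at all: it simply cites \citet{DoerrW14ranking} for the estimate, so your one-line fallback (``cite the reference'') is exactly what the paper does, while your explicit argument is a self-contained filling-in of the standard proof. That argument is sound: $\Pr[X=k]=\binom{\ell}{k}2^{-\ell}$, the central coefficient satisfies $\binom{\ell}{\lfloor\ell/2\rfloor}2^{-\ell}=\Theta(\ell^{-1/2})$ by Stirling, and the telescoping product of $O(\sqrt{\ell})$ factors $\frac{\ell-j}{j+1}$, each at least $1-O(1/\sqrt{\ell})$ in the window, gives a ratio $\binom{\ell}{k}/\binom{\ell}{\lfloor\ell/2\rfloor}=\Omega(1)$. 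Two small bookkeeping points you may as well make explicit: by the symmetry $\binom{\ell}{k}=\binom{\ell}{\ell-k}$ you can assume $k\ge\lfloor\ell/2\rfloor$, so all telescoping factors are at most $1$ and at least $1-O(1/\sqrt{\ell})$ (for $k$ below the center some factors exceed $1$, which would only help the lower bound anyway); and for the finitely many $\ell$ below the threshold depending on $c$, as well as for $k$ clipped to $[0,\ell]\cap\N$, the probability is a positive constant that is absorbed into the implicit constant, as you say. With those remarks the write-up is complete and buys a proof independent of the cited source, at the cost of a few lines the paper chose not to spend.
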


For the proof of the run time of the \sigcGA on \OM, we show that, during \emph{each} of the first $O(n\log n)$ iterations, each position can become relevant for selection with a decent probability of $\Omega(1/\sqrt{n})$. In contrast to \LO, there is no sudden change in the probability that~$1$s are saved. Thus, it takes $O(n\log n)$ iterations to set a frequency to $1 - 1/n$. However, this is done for all frequencies in parallel. Thus, the overall run time remains $O(n\log n)$.

\begin{theorem}
	\label{thm:oneMax}
	Consider the \sigcGA (Alg.~\ref{alg:adaptiveEDA}) with $\varepsilon > 12$ being a constant. Its run time on \OM is $O(n \log n)$ w.h.p. and in expectation.
\end{theorem}
\begin{proof}
	We first show that the run time holds w.h.p. Then we prove the expected run time.
	
	\textbf{Run time w.h.p.} We consider the first $O(n \log n)$ iterations and condition on the event that no frequency decreases during that time. This can be argued in the same way as at the beginning in the proof of Theorem~\ref{thm:leadingOnes}.
	
	We now show that a single frequency (starting at~$1/2$) is increased to $1 - 1/n$ within the first $O(n \log n)$ iterations w.h.p. as long as the other frequencies are at~$1/2$ or at $1 - 1/n$. Hence, \emph{all} frequencies are increased during that time w.h.p. when applying a union bound.
	
	Similar to the proof of Theorem~\ref{thm:leadingOnes}, when proving the expected run time, we use that, if all frequencies start at~$1/n$, they are set to $1 - 1/n$ w.h.p. within $O(n^2 \log n)$ iterations in parallel. Thus, we combine both cases in the following argumentation.
	
	Let $s \in \{1/2, 1/n\}$ denote the starting value of a frequency that we consider, and let $\ell \in [n]$ denote the number of frequencies \emph{not} at $1 - 1/n$ during an arbitrary single iteration. Further, let $i \in [n]$ be a position in that iteration such that $\tau_i = s$. We prove that~$H_i$ saves~$1$s more likely by a factor of $1 + \Theta(1/\sqrt{\ell})$ when compared to the hypothesis. This results in~$\tau_i$ being increased to $1 - 1/n$ within $O\big((\ell/s) \log n\big)$ iterations.
	
	We determine the bias in saving a~$1$ by making the following observation: ignoring position~$i$, if the absolute difference in the number of~$1$s of both offspring is greater than one, then bit~$i$ is not relevant for determining which offspring is selected. However, if the difference in the number of~$1$s (except position~$i$) of both offspring is at most~$1$, having a~$1$ at position~$i$ makes it more likely for an individual to be selected. We now formalize this idea.
	To this end, let~$O$ denote the event that~$H_i$ saves a~$1$, and let~$A$ denote the event that the difference of both offspring (except position~$i$) is greater than~$1$. Note that in the case of~$A$, the probability to save a~$1$ is~$\tau_i$.
	
	We now consider the case of~$\overline{A}$, that is, the absolute difference in the number of~$1$s of both offspring (excluding position~$i$) is at most one. If it is zero, then~$H_i$ saves a~$1$ if none of the offspring has a~$0$ at position~$i$. Thus, the respective probability is $1 - (1 - \tau_i)^2 = 2\tau_i - \tau_i^2$. In the case of the numbers of~$1$ differing by exactly one, a~$1$ is saved if the winner (with respect to all bits but bit $i$) has a~$1$ at position~$i$ (which it has with a probability of~$\tau_i$), or if the winner has a~$0$ at position~$i$, the loser has a~$1$, and the loser wins the tie-breaking. The probability of this event is $(1/2)\tau_i(1 - \tau_i) \geq (1/4)\tau_i$. All in all, the probability to save a~$1$ conditional on~$\overline{A}$ is at least $\tau_i + (1/4)\tau_i = (5/4)\tau_i$, since $(5/4)\tau_i \leq 2\tau_i - \tau_i^2$ for $\tau_i \in \{1/2, 1/n\}$.
	
	Taking both cases together, we bound
	\begin{align*}
		\Pr[O] \geq \Pr[A]\cdot\tau_i + \Pr\!\big[\overline{A}\big]\cdot\frac{5}{4}\tau_i\ .
	\end{align*}
	By Lemma~\ref{lem:convexCombination}, we lower bound this term even further by calculating a lower bound for $\Pr\!\big[\overline{A}\big]$.
	We first show that the frequencies at $1 - 1/n$ and~$1/n$ sample the same bits in both offspring with at least a constant probability. For the $n - \ell$ positions with frequencies at $1 - 1/n$, both offspring have a~$1$ at the respective positions with a probability of $(1 - 1/n)^{2(n - \ell)} \geq e^{-2}$, since $n - \ell \leq n - 1$.  Analogously, for all positions with frequencies at~$1/n$ (but~$\tau_i$), both offspring have a~$0$ at position~$i$ also with a probability of at least $(1 - 1/n)^{2(n - 1)} \geq e^{-2}$. Hence, both offspring have the same bits at all positions with frequencies not at~$1/2$ with a probability of at least~$e^{-4}$.
	
	We now consider the number of~$1$s of an offspring at the remaining $\ell' \leq \ell - 1$ (for $\ell \geq 2$) positions (except~$i$) with frequencies at~$1/2$. We call this number~$Y$. Note that the expected value of~$Y$ is $\ell'/2$. By Lemma~\ref{lem:centralBinomialCoefficient}, for a $k \in [\ell'/2 \pm \sqrt{\ell'/2}]$, the probability that $Y = k$ is $\Omega(1/\sqrt{\ell'})$. Thus, the probability that both offspring have the same number of~$1$s at the~$\ell'$ positions we consider is $d/\sqrt{\ell'}$, for a constant $d > 0$, since there are $\sqrt{\ell'}$ possible values of~$k$ and the probability that both offspring have~$k$ bits as~$1$ is $\Omega(1/\ell')$. Factoring in the probability of all remaining $n - \ell'$ positions to sample the same values in both offspring and for a sufficiently small constant $d' > 0$, we bound
	\begin{align*}
		\Pr[O] &\geq \left(1 - e^{-4}\frac{d}{\sqrt{\ell'}}\right)\cdot\tau_i + e^{-4}\frac{d}{\sqrt{\ell'}}\cdot\frac{5}{4}\tau_i\\
		&\geq \left(1 + \frac{d'}{\sqrt{\ell}}\right)\tau_i\ .
	\end{align*}
	Recall that we assumed $\ell \geq 2$ for this bound. For $\ell = 1$, i.e., $\ell' = 0$, we have $n - 1$ positions with frequencies at $1 - 1/n$ or $1/n$. Thus, $\Pr\!\big[\overline{A}\big] \geq e^{-4}$, as we discussed before. Consequently, we bound $\Pr[O] \geq (1 - e^{-2})\cdot \tau_i + e^{-2}\cdot (5/4)\tau_i \geq (1 + d'/\sqrt{\ell})\tau_i$ if we choose~$d'$ sufficiently small. Overall, we use $(1 + d'/\sqrt{\ell})\tau_i$ as a lower bound for $\Pr[O]$.
	
	Analogous to the proof of Theorem~\ref{thm:leadingOnes}, we now consider the probability to detect a significance of~$1$s in~$H_i$ within~$k$ iterations. To this end, let $X \sim \mathrm{Bin}\big(k, (1 + d'/\sqrt{\ell})\tau_i\big)$ and note that~$X$ is stochastically dominated by the process of saving~$1$s at position~$i$. We bound the probability to not detect a significance of~$1$s as follows:
	\begin{align*}
		&\Pr\left[X < k\tau_i + s\left(\varepsilon, k\tau_i\right)\right]\\
		&\hspace*{6 em}\leq \Pr\left[X \leq E[X] - \Bigg(\frac{kd'}{\sqrt{\ell}}\tau_i - s\left(\varepsilon, k\tau_i\right)\Bigg)\right]\ .
	\end{align*}
	Let $k \geq 4(\varepsilon^2/d'^2)(\ell/\tau_i)\ln n$. Then $(kd'/\sqrt{\ell})\tau_i - s(\varepsilon, k\tau_i) \geq \big(kd'/(2\sqrt{\ell})\big)\tau_i \eqqcolon \lambda$. Further note that $\Var[X] = k\tau_i(1 - \tau_i) \geq \lambda$ if~$d'$ is sufficiently small, which implies $\lambda^2/\Var[X] \leq \lambda$. By Theorem~\ref{thm:additiveChernoff} with $\Var[X] \leq k\tau_i$, we see that
	\begin{align*}
		\Pr\left[X < k\tau_i + s\left(\varepsilon, k\tau_i\right)\right] &\leq \Pr\left[X \leq E[X] - \frac{kd'}{2\sqrt{\ell}}\tau_i\right]\\
		&\hspace*{-7 em}\leq e^{-\frac{1}{3}\cdot\frac{4k^2d'^2\tau_i^2}{4\ell k\tau_i}} = e^{-\frac{1}{3}\cdot\frac{kd'^2}{\ell}\tau_i} \leq e^{-\frac{4}{3}\varepsilon^2\ln n} = n^{-\frac{4}{3}\varepsilon^2}\ .
	\end{align*}
	Hence, $\tau_i$ is increased to $1 - 1/n$ within $4(\varepsilon^2/d'^2)(\ell/\tau_i)\ln n = O\big((\ell/\tau_i)\log n\big)$ iterations with a probability of at least $1 - n^{-4\varepsilon^2/3}$. By applying a union bound over all~$n$ frequencies, each frequency reaches $1 - 1/n$ within $O\big((\ell/\tau_i)\log n\big)$ iterations with a probability of at least $1 - n^{1 - 4\varepsilon^2/3} \geq 1 - n^{-191}$, as $\varepsilon > 12$, which is w.h.p.
	
	Since we assume that no frequency drops within the first $O(n \log n)$ iterations w.h.p. and since all frequencies start at~$1/2$, all of them reach $1 - 1/n$ within that time w.h.p. Then, the optimum is sampled during a single iteration with a probability of at least $(1 - 1/n)^n \geq 1/(2e) = \Omega(1)$. Thus, the optimum is sampled after $O(\log n)$ additional iterations w.h.p.
	
	\textbf{Expected run time.} This part follows the same arguments as outlined in the respective part in the proof of Theorem~\ref{thm:leadingOnes}. Different from there, assuming that a frequency is at~$1/n$, it now takes $O(n^2\log n)$ iterations to be increased to $1 - 1/n$ w.h.p., as we proved above. However, since $\varepsilon > 12$, a union bound over all~$n$ frequency again results in all frequencies being increased during $O(n^2\log n)$ iterations w.h.p. The rest remains the same, which concludes this proof.
\end{proof}

\textbf{\OM vs. \LO.} While the \sigcGA has the same asymptotic run time on \LO and \OM (w.h.p. and in expectation), the reasons differ. For \LO, the frequencies are increased consecutively to $1 - 1/n$, where each frequency only needs $O(\log n)$ iterations, which is also the asymptotic minimum number of iterations to do so. This speed results from the sudden boost in probability once a position~$i$ becomes relevant, that is, its preceding frequencies are all at $1 - 1/n$ and thus sample only~$1$s with at least a constant probability. Given that both offspring have only~$1$s at positions in $[i - 1]$, it suffices that position~$i$ has at least one~$1$, which is quite likely. In contrast, the probability that the bias in selection is also detected at positions after~$i$ declines exponentially in the distance to~$i$, making the bias negligible. This fact is also exploited by Friedrich et~al.~\cite{DBLP:conf/gecco/FriedrichKK16} in the analysis (and design) of the scGA, which is why it has the same run time on \LO.

For \OM, the impact of the bias in selection depends on the number~$\ell$ of other frequencies that are not at $1 - 1/n$. In order for a position~$i$ to detect the bias, the number of~$1$s in these positions has to almost be identical in both offspring, i.e., it can differ by at most one. This then adds a bias of roughly $1/\sqrt{\ell}$ for saving a~$1$ in~$H_i$. Since~$\ell$ is large for a long time (for example, $\ell = n$ at the beginning), this bias remains small during that period. However, this bias is there constantly for each position. Thus, all frequencies can be optimized in parallel, whereas for \LO this is done sequentially.

\textbf{\OM vs. \BV.} \BV is often considered one extremal case of the class of linear functions, as its weights impose a lexicographic order on the bit positions. The other extreme is \OM, where all weights are identical and basically no order among the positions exists. Our results show that the \sigcGA optimizes both functions in $O(n \log n)$. It remains an open question whether the \sigcGA is capable of optimizing any linear function in that time, a feat that the $(1 + 1)$~EA, a classical EA, is known to be capable of~\cite{DBLP:journals/tcs/DrosteJW02}. Contrary to that, it was proven for the cGA (an EDA) that it performs worse on \BV than on \OM~\cite{DBLP:journals/nc/Droste06}. Thus, a uniform performance on the class of linear functions would be a great feat for an EDA.

We would like to note that the result of Droste~\cite{DBLP:journals/nc/Droste06} considered the cGA without frequency borders, that is, the frequencies could reach values of~$0$. Once this is the case, the algorithm is stuck (as it only samples~$0$ at this position) and the optimization fails. It is unknown up to date whether the cGA still performs worse on \BV when the frequencies are bound to the interval $[1/n, 1 - 1/n]$. However, the main idea of Droste's proof that frequencies drop very low remains. Thus, if sufficiently many frequencies were to drop to~$1/n$, the cGA would still perform badly on \BV. Note that this is exactly the problem that the \sigcGA circumvents with its update rule, resulting in its run time of $O(n \log n)$.

The only other known EDA run time result for \BV was recently proven by Lehre and Nguyen~\cite{LehreN18PBIL}. They show that the PBIL optimizes \BV with $O(n^2)$ fitness function evaluations in expectation (considering best parameter choices).

\section{Run Time Analysis for the scGA}
\label{sec:scGA}

Another variant of the cGA~\cite{HarikLG98} that is able to optimize \LO in $O(n\log n)$ w.h.p. is the \emph{stable compact genetic algorithm} (scGA; Alg.~\ref{alg:scGA}) introduced by Friedrich et~al.~\cite{DBLP:conf/gecco/FriedrichKK16} with the intent to provide an EDA that optimizes \LO in $o(n^2)$. The update procedure of the scGA is very similar to that of the cGA, that is, a frequency at position~$i$ is changed by a value of~$\rho$ with respect to the difference of the bits at~$i$ of the winner and the loser. However, an update toward~$1/2$ is stronger by an additive term of~$a$, where $a \in O(\rho)$ is an additional parameter.

Different from many other EDAs, the scGA does not have a margin and explicitly makes use of the frequency values~$0$ and~$1$. In fact, the scGA has another parameter $d \in (1/2, 1)$, which indicates a value that is sufficient in order to set a frequency to~$1$. The value $1 - d$ is used symmetrically in order to set a frequency to~$0$. Thus, the scGA fixes frequencies once they leave the interval $(1 - d, d)$.

\begin{theorem}
	\label{thm:scGAOneMax}
	Let $\alpha \in (0, 1]$ be a constant. Consider the scGA with $\rho = O(1/\log n)$, $a = \alpha\rho$, and $1/2 < d \leq 5/6$ with $d = \Theta(1)$. Its run time on \OM is $\Omega\big(\!\min\{2^{\Theta(n)}, 2^{c/\rho}\}\big)$ in expectation and w.h.p. for a constant $c > 0$.
\end{theorem}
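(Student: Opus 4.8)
The plan is to show that on \OM\ the selection bias felt by any single bit is only of order $1/\sqrt n$, whereas the scGA's penalty term $a=\alpha\rho$ pulls every active frequency back towards $1/2$ by a constant fraction of a full step; hence each frequency performs a random walk with a constant-factor drift towards $1/2$, so that reaching either commitment threshold $1-d$ or $d$ (a distance $\Theta(1)$ away) needs $2^{\Omega(1/\rho)}$ iterations, and as long as no frequency has been committed the all-ones string is sampled only with probability at most $(5/6)^n$ per individual.

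First I would fix a position $i$ and bound the one-step change of $\tau_i$ while $\tau_i$ and all other frequencies still lie in the active range $(1-d,d)$. Let $X,Y$ be the two offspring of the current iteration and $D=(\OM(X)-X_i)-(\OM(Y)-Y_i)=\sum_{j\neq i}(X_j-Y_j)$, which is symmetric about $0$. Writing $p^+$ (resp.\ $p^-$) for the probability that the $+\rho$ (resp.\ $-(\rho+a)$) branch of the scGA update is taken at position $i$, a short computation conditioning on the offspring's \OM-values at the positions other than $i$ gives $p^++p^-=2\tau_i(1-\tau_i)$ and $p^+-p^-=2\tau_i(1-\tau_i)\bigl(\Pr[D=0]+\Pr[D=1]\bigr)$. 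Since $D$ is a sum of independent symmetric $\{-1,0,1\}$-valued steps, each equal to $\pm1$ with probability $\tau_j(1-\tau_j)\ge (1/6)(5/6)$ for every active position $j$, a standard local anti-concentration estimate (after conditioning on the number of positions where $X$ and $Y$ differ, which is $\Omega(n)$ with probability $1-2^{-\Omega(n)}$ by Theorem~\ref{thm:additiveChernoff}) yields $\Pr[D=0]+\Pr[D=1]=O(1/\sqrt n)$. As $\tau_i(1-\tau_i)\ge d(1-d)\ge 5/36=\Theta(1)$ on the active range we get $p^-=\tau_i(1-\tau_i)-\tfrac12(p^+-p^-)=\Theta(1)$, so the drift of $\tau_i$ equals $(p^+-p^-)\rho-p^-a=O(\rho/\sqrt n)-\Theta(\alpha\rho)=-\Theta(\rho)$ when $\tau_i\in(1/2,d)$, and symmetrically it is $+\Theta(\rho)$ when $\tau_i\in(1-d,1/2)$; moreover $|\tau_i^{(t+1)}-\tau_i^{(t)}|\le(1+\alpha)\rho$ deterministically.

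To remove the mild circularity that this drift bound presupposes the other frequencies are active, I would let $\sigma$ be the first iteration at which some frequency leaves $(1-d,d)$. Before $\sigma$ every frequency is active, so the drift bound above holds for all of them up to time $\sigma$, and I can couple each $\tau_i$ (up to $\sigma$) with a random walk on $(1-d,d)$ whose drift points towards $1/2$ with magnitude $\Theta(\rho)$ and whose steps are bounded by $(1+\alpha)\rho$. Applying the negative drift theorem of Oliveto and Witt at each of the two boundaries yields a constant $c_0>0$ with $\Pr[\tau_i\text{ leaves }(1-d,d)\text{ within the first }t\text{ iterations and at time }\le\sigma]\le t\,2^{-c_0/\rho}$, and a union bound over the $n$ positions gives $\Pr[\sigma\le t]\le n\,t\,2^{-c_0/\rho}$. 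Now fix small constants $c,\varepsilon>0$ with $\varepsilon<\log_2(6/5)$ and set $T=\min\{2^{c/\rho},2^{\varepsilon n}\}$; using $1/\rho=\Omega(\log n)$ and choosing $c$ sufficiently small relative to $c_0$, we obtain $\Pr[\sigma\le T]\le n\,2^{-(c_0-c)/\rho}=o(1)$. On the complementary event every frequency stays in $(1-d,d)\subseteq(1/6,5/6)$ throughout the first $T$ iterations, so each of the $\le 2T\le 2^{1+\varepsilon n}$ sampled individuals equals the optimum with probability at most $(5/6)^n$, whence the optimum is found with probability at most $2^{1+\varepsilon n}(5/6)^n=2^{-\Omega(n)}$. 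Altogether the run time exceeds $T=\min\{2^{c/\rho},2^{\varepsilon n}\}$ with probability $1-o(1)$, which yields the claimed bound $\Omega(\min\{2^{\Theta(n)},2^{c/\rho}\})$ both with high probability and in expectation.

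The technical heart, and the step I expect to be most delicate, is the one-step drift analysis together with the anti-concentration bound $\Pr[D=0]+\Pr[D=1]=O(1/\sqrt n)$: this estimate, which is exactly what makes the $\Theta(\rho)$ restoring drift dominate the $O(\rho/\sqrt n)$ \OM-bias, must be shown to hold uniformly over all frequency vectors reachable before $\sigma$, and it must be bootstrapped through the stopping time $\sigma$ rather than assumed a priori. Once that is in place, verifying that the step-size and drift hypotheses of the negative drift theorem are met with the stated constants, and tracking how small $c$ must be chosen (relative to $c_0$ and to the hidden constant in $1/\rho=\Omega(\log n)$) so that the union bound closes, is then routine.
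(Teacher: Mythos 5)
Your proposal is correct and follows essentially the same route as the paper's proof: the identical case analysis of the update probabilities conditioned on the fitness difference of the other $n-1$ bits (giving a bias of order $\rho/\sqrt{n}$ via an $O(1/\sqrt{n})$ anti-concentration bound, which the paper cites as a lemma of Sudholt and Witt rather than rederiving), a restoring drift of order $\alpha\rho$ from the penalty term, the Oliveto--Witt negative-drift theorem over an interval of length $\Theta(1/\rho)$, and the $(5/6)^n$ bound on sampling the optimum while all frequencies remain in $(1-d,d)$, yielding the $\min\{2^{\Theta(n)},2^{c/\rho}\}$ bound. Your explicit stopping time, the symmetric drift treatment of the lower boundary $1-d$ (where the paper uses a domination argument instead), and the explicit union bound over positions are only presentational refinements of the same argument.
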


\begin{algorithm2e}
	\caption{The scGA~\cite{DBLP:conf/gecco/FriedrichKK16} with parameters $\rho$, $a$, and $d$ optimizing $f$}
	\label{alg:scGA}
	$t \gets 0$\;
	\lFor{$i \in [n]$}{$\tau^{(t)}_i \gets \frac{1}{2}$}
	\Repeat{\emph{termination criterion met}}
	{
		$x, y \gets$ offspring sampled with respect to $\tau^{(t)}$\;
		$(x, y) \gets$ winner/loser of $x$ and $y$ with respect to $f$\;
		\For{$i \in [n]$}
		{
			\If{$x_i > y_i$}
			{
				\lIf{$\tau^{(t)}_i \leq \frac{1}{2}$}{$\tau^{(t + 1)}_i \gets \tau^{(t)}_i + \rho + a$}
				\lElseIf{$\frac{1}{2} < \tau^{(t)}_i < d$}{$\tau^{(t + 1)} \gets \tau^{(t)}_i + \rho$}
				\lElse{$\tau^{(t + 1)}_i \gets 1$}
			}
			\ElseIf{$x_i < y_i$}
			{
				\lIf{$\tau^{(t)}_i \geq \frac{1}{2}$}{$\tau^{(t + 1)}_i \gets \tau^{(t)}_i - \rho - a$}
				\lElseIf{$1 - d < \tau^{(t)}_i < \frac{1}{2}$}{$\tau^{(t + 1)}_i \gets \tau^{(t)}_i - \rho$}
				\lElse{$\tau^{(t + 1)}_i \gets 0$}
			}
			\lElse{$\tau^{(t + 1)}_i \gets \tau^{(t)}_i$}
		}
		$t \gets t+1$\;
	}
\end{algorithm2e}

Before we prove the theorem, we mention two other theorems that we are going to use in the proof. The first bounds the probability of a randomly sampled bit string having $s \in \{0\} \cup [n]$ $1$s. We use it in order to bound the probability of both offspring having the same number of $1$s. Note that the values $1/6$ and $5/6$ in the lemma are somewhat arbitrary and can be exchanged for any constant in $(0, 1/2)$ and $(1/2, 1)$, respectively.
\begin{lemma}[\cite{SudholtW19cGAACOOneMaxLowerBound}]
	\label{lem:independentPoissonSum}
	Let $S$ denote the sum of $n$ independent Poisson trials with probabilities $\tau_1, \ldots, \tau_n$ such that, for all $i \in [n]$, $1/6 \leq \tau_i \leq 5/6$. Then, for all $s \in \{0\} \cup [n]$,
	\[
	\Pr[S = s] = O\left(\frac{1}{\sqrt{n}}\right)\ .
	\]
\end{lemma}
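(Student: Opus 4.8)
My plan is to derive the uniform point-probability bound from Fourier inversion, using that the constraint $\tau_i \in [1/6, 5/6]$ forces every summand to carry a fixed minimum amount of variance, so that $S$ is necessarily spread over $\Omega(\sqrt n)$ values. Writing $S = \sum_{j=1}^n X_j$ with the $X_j$ independent, $\Pr[X_j = 1] = \tau_j$ and $\Pr[X_j = 0] = 1 - \tau_j$ (Poisson trials being exactly independent, non-identical Bernoulli variables), the characteristic function factorises as $\phi(t) = E[e^{\mathrm i t S}] = \prod_{j=1}^n (1 - \tau_j + \tau_j e^{\mathrm i t})$, where $\mathrm i$ denotes the imaginary unit. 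Because $S$ is integer-valued, the lattice inversion formula gives, for every integer $s$, $\Pr[S = s] = \frac{1}{2\pi}\int_{-\pi}^{\pi} e^{-\mathrm i s t}\phi(t)\,\mathrm dt$, and bounding the integrand by its modulus yields the $s$-independent estimate $\Pr[S = s] \le \frac{1}{2\pi}\int_{-\pi}^{\pi} |\phi(t)|\,\mathrm dt$. Since $\Pr[S = s] = 0$ for $s \notin \{0\}\cup[n]$ anyway, it suffices to bound this single integral.

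The core step is a pointwise Gaussian bound on $|\phi(t)|$. A short computation gives the identity $|1 - \tau_j + \tau_j e^{\mathrm i t}|^2 = 1 - 2\tau_j(1 - \tau_j)(1 - \cos t)$, and since $x \mapsto x(1-x)$ attains its minimum over $[1/6, 5/6]$ at the endpoints, $\Var[X_j] = \tau_j(1 - \tau_j) \ge 5/36$. Hence each factor obeys $|1 - \tau_j + \tau_j e^{\mathrm i t}|^2 \le 1 - \tfrac{5}{18}(1 - \cos t) \le \exp\!\big(-\tfrac{5}{18}(1 - \cos t)\big)$, and multiplying over $j$ and taking the square root gives $|\phi(t)| \le \exp\!\big(-\tfrac{5n}{36}(1 - \cos t)\big)$. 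Invoking the elementary inequality $1 - \cos t \ge \tfrac{2}{\pi^2}t^2$, valid for $|t| \le \pi$, turns this into $|\phi(t)| \le \exp(-c n t^2)$ with the absolute constant $c = 5/(18\pi^2)$.

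Integrating the Gaussian bound then finishes the argument: $\Pr[S = s] \le \frac{1}{2\pi}\int_{-\pi}^{\pi} e^{-c n t^2}\,\mathrm dt \le \frac{1}{2\pi}\int_{-\infty}^{\infty} e^{-c n t^2}\,\mathrm dt = \frac{1}{2\sqrt{\pi c\,n}} = O(1/\sqrt n)$, uniformly in $s$, as claimed. I expect no genuine difficulty here; the proof is essentially routine, and the only places demanding care are the lattice inversion formula—integrating $\phi$ over exactly one period $[-\pi, \pi]$ is what makes the estimate hold simultaneously for all $s$—and the two elementary inequalities (the modulus identity and the quadratic lower bound on $1 - \cos t$). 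As a still shorter alternative one may simply invoke the Kolmogorov--Rogozin concentration inequality, which for a sum of independent variables gives $\sup_s \Pr[S = s] \le C\big(\sum_{j}(1 - \max_k \Pr[X_j = k])\big)^{-1/2}$; here $\max_k \Pr[X_j = k] = \max\{\tau_j, 1 - \tau_j\} \le 5/6$, so each summand is at least $1/6$ and the denominator is $\Omega(\sqrt n)$, again yielding the $O(1/\sqrt n)$ bound.
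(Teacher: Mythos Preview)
Your proof is correct. The paper itself does not prove this lemma at all; it merely states it as a cited result from Sudholt and Witt (the reference in the lemma header) and uses it as a black box in the proof of Theorem~\ref{thm:scGAOneMax}. Your Fourier-inversion argument is a standard and fully rigorous route to the bound: the modulus identity, the variance lower bound $\tau_j(1-\tau_j)\ge 5/36$, the inequality $1-\cos t\ge 2t^2/\pi^2$ on $[-\pi,\pi]$, and the Gaussian integral are all correct as stated. The Kolmogorov--Rogozin alternative you mention is likewise valid and arguably the shortest path. In either case you have supplied a self-contained proof where the paper only invokes an external reference, so there is nothing to compare against beyond noting that your argument stands on its own.
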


The next theorem provides an upper bound on the probability of a random process stopping after a certain time. We use it in order to show that it is unlikely for a frequency of the scGA when optimizing \OM to get to $1$ within a certain number of iterations.
\begin{theorem}[Negative Drift; \cite{DBLP:journals/algorithmica/OlivetoW11, DBLP:journals/corr/OlivetoW12}]
	\label{thm:negDrift}
	Let $(X_t)_{t \in \N}$ be real-valued random variables describing a stochastic process over some state space, with $X_0 \geq b$. Suppose there exist an interval $[a, b] \subseteq \R$, two constants $\delta$, $\varepsilon > 0$, and, possibly depending on $\ell \coloneqq b - a$, a function $r(\ell)$ satisfying $1 \leq r(\ell) = o\big(\ell/\log(\ell)\big)$ such that, for all $t \in \N$, the following two conditions hold:
	\begin{enumerate}
		\item\label{item:negativeDrift} $E[X_{t + 1} - X_t \mid X_t \land a < X_t < b] \geq \varepsilon$ and,\\
		
		\item\label{item:stepWidth} for all $j \in \N$, $\Pr[|X_{t + 1} - X_t| \geq j \mid X_t \land X_t > a] \leq \frac{r(\ell)}{(1 + \delta)^j}$.
	\end{enumerate}
	Then there is a constant $c > 0$ such that, for $T \coloneqq \min\{t \in \N \mid X_t \leq a\}$, it holds that
	\[
	\Pr\left[T \leq 2^{\frac{c\ell}{r(\ell)}}\right] = 2^{-\Omega\left(\frac{\ell}{r(\ell)}\right)}\ .
	\]
\end{theorem}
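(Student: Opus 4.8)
The plan is to apply the negative drift theorem (Theorem~\ref{thm:negDrift}) to a single frequency of the scGA, say $\tau_1$, and show that it takes exponential time to reach the value~$1$, which is a prerequisite for sampling the optimum. Since \OM treats all positions symmetrically and the algorithm processes the frequency vector componentwise, I would focus on one representative frequency and track its evolution as a random process $(X_t)$. The key observation, which I expect to be the main technical obstacle, is to quantify the bias in selection at a single position. When two offspring are sampled and their \OM values (excluding position~$i$) differ by more than one, bit~$i$ is irrelevant to selection and the frequency performs a fair random walk biased \emph{downward} by the scGA's asymmetric update (it decreases by $\rho + a$ but increases only by~$\rho$). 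Only when the remaining bits tie or differ by one does position~$i$ influence selection, creating an upward push; by Lemma~\ref{lem:independentPoissonSum}, this happens with probability only $O(1/\sqrt{n})$, provided the relevant frequencies lie in $[1/6, 5/6]$.

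The heart of the argument is therefore the drift computation establishing condition~\ref{item:negativeDrift}. First I would argue that, as long as $\tau_1$ stays in a suitable interval $[a, b]$ bounded away from~$d$ (say $b = d - \Theta(\rho)$ and $a = 1/2 + \Theta(\rho)$ or similar, chosen so $\ell = b - a = \Theta(1)$), the selection bias toward~$1$s contributes at most an $O(\rho/\sqrt{n})$ upward term to the expected change, while the asymmetric penalty~$a = \alpha\rho$ contributes a constant-fraction downward term of order~$\rho$. Concretely, in the regime where position~$i$ is irrelevant (probability $1 - O(1/\sqrt{n})$), the symmetric selection gives equal probability of an up\=/ and down\=/step, but each down\=/step is larger by~$a$, so the net drift is a negative multiple of~$a - $ (correction). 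I would need to verify that the weak $O(1/\sqrt{n})$ upward bias from relevant positions cannot overcome the $\Theta(\rho)$ downward drift from the asymmetric update; this is exactly where the assumption $a = \Theta(\rho)$ (rather than merely $a = O(\rho)$) becomes essential, and it is the crux of why the scGA fails on \OM but succeeds on \LO, where the bias is $\Theta(1)$ rather than $\Theta(1/\sqrt{n})$. I would set $\varepsilon = \Theta(\rho)$ as the drift bound.

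For condition~\ref{item:stepWidth}, the step width is bounded deterministically: in a single iteration $|X_{t+1} - X_t| \leq \rho + a = O(\rho)$, so taking $r(\ell)$ to be a suitable constant (the steps are bounded, so the exponential-tail condition holds trivially with any $\delta > 0$ once $\rho$ is small) satisfies $1 \leq r(\ell) = o(\ell/\log \ell)$ since $\ell = \Theta(1)$. I would then invoke Theorem~\ref{thm:negDrift} to conclude that $\tau_1$ does not reach the upper boundary~$b$ (and hence cannot reach~$d$ to be set to~$1$) within $2^{\Theta(\ell/r(\ell))} = 2^{\Theta(1/\rho)}$ iterations, except with probability $2^{-\Omega(1/\rho)}$, after rescaling the process by~$\rho$ so that the interval length becomes $\Theta(1/\rho)$ in step units and drift becomes $\Theta(1)$. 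The two terms in $\min\{2^{\Theta(n)}, 2^{c/\rho}\}$ arise from two distinct bounds: the $2^{c/\rho}$ factor is the drift result just described, while the $2^{\Theta(n)}$ factor accounts for the alternative event that the optimum is sampled directly before any single frequency stabilizes---even if many frequencies drift near~$d$, sampling the all\=/ones string requires all~$n$ bits simultaneously, which has probability at most a constant bounded away from~$1$ per iteration unless frequencies are close to~$1$, giving an independent exponential\=/in\=/$n$ barrier. Finally, since the stated bound holds with high probability and the failure probabilities are exponentially small, the expected run time bound follows by the standard argument that a $2^{-\Omega(\cdot)}$ failure probability contributes negligibly.
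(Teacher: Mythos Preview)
Your proposal does not address the stated theorem at all. Theorem~\ref{thm:negDrift} is the Negative Drift theorem of Oliveto and Witt, cited from~\cite{DBLP:journals/algorithmica/OlivetoW11, DBLP:journals/corr/OlivetoW12}; the paper does not prove it but merely quotes it as a known tool. What you have written is not a proof of Theorem~\ref{thm:negDrift} but rather a sketch of how to \emph{apply} Theorem~\ref{thm:negDrift} to establish Theorem~\ref{thm:scGAOneMax}, the lower bound for the scGA on \OM. These are entirely different tasks: proving the drift theorem requires a martingale/potential argument bounding hitting times under the two drift conditions, which your write-up never touches.

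If, on the other hand, the intent was really to prove Theorem~\ref{thm:scGAOneMax}, then your outline is close in spirit to the paper's argument but is imprecise in one place that matters. You set up the interval on the raw frequency scale with $\ell = \Theta(1)$ and only afterwards say you would ``rescale by~$\rho$''; but the hypotheses of Theorem~\ref{thm:negDrift} require the drift bound $\varepsilon$ to be a \emph{constant} and the step-size tail condition to be checked on the same scale. The paper handles this cleanly by defining the potential $g(\tau_i) = (1-\tau_i)/\rho$ up front, so that the interval length is $\ell = \Theta(1/\rho)$, the drift of $g$ is a genuine positive constant $\beta\,\tau(1-\tau) \ge 5\beta/36$, and the step size of $g$ is deterministically at most~$2$ (whence $r(\ell)=2$, $\delta=\sqrt{2}-1$ work). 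Your version, with $\varepsilon = \Theta(\rho)$ on the unrescaled process, does not satisfy condition~\ref{item:negativeDrift} as stated, and the informal ``rescaling afterwards'' needs to be made explicit to be correct. The rest of your plan (using Lemma~\ref{lem:independentPoissonSum} to bound the tie probability by $O(1/\sqrt{n})$, exploiting $a=\Theta(\rho)$ to dominate this bias, and handling the $2^{\Theta(n)}$ term via the direct sampling probability) matches the paper.
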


We now prove our result.

\begin{proof}[Proof of Theorem~\ref{thm:scGAOneMax}]
	We only show that the run time is in $\Omega\big(\!\min\{2^{\Theta(n)}, 2^{c/\rho}\}\big)$ w.h.p. The statement for the expected run time follows by lower-bounding the terms that occur with a probability of $o(1)$ with~$0$.
	
	We first prove the bound of $\Omega\big(2^{c/\rho}\big)$.
	We do so by showing that each frequency will stay in the non-empty interval $(1 - d, d) \subset [1/6, 5/6]$ w.h.p. Although \OM introduces a bias into updating a frequency, it is too tiny in order to compensate the strong drift toward $1/2$ in the update.
	
	We lower-bound the expected time it takes the scGA to optimize \OM by upper-bounding the probability it takes a single frequency to leave the interval $(1 - d, d)$. Thus, we condition during the entire proof implicitly on the event that all frequencies are in the interval $(1 - d, d)$. Note that, in this scenario, the probability to sample the optimum during an iteration is at most $(5/6)^n$, which is exponentially small, even for a polynomial number of iterations.
	
	Consider an index $i \in [n]$ with $\tau_i \in (1 - d, d)$. We only upper-bound the probability it takes $\tau_i$ to reach $d$. Note that the probability of $\tau_i$ reaching $1 - d$ is at most that large, as \OM introduces a bias for $1$s into the selection process. Hence, we could argue optimistically for $\tau_i$ reaching $1 - d$ as we do for $\tau_i$ reaching $d$ by swapping $1$s for $0$s and considering $1 - \tau_i$ instead.
	
	Let $T$ denote the first point in time $t$ such that $\tau^{(t)}_i \geq d$. We want to apply Theorem~\ref{thm:negDrift} and show that it is unlikely for $\tau_i$ to reach $d$ within $2^{c/\rho}$ iterations. Hence, we define the following potential function $g\colon [0, 1] \to \R$:
	\[
	g(\tau_i) = \frac{1}{\rho}(1 - \tau_i)\ ,
	\]
	which we will use for our frequencies. Note that, at the beginning, $\tau_i$ is at $1/2$, i.e., $g(1/2) = 1/(2\rho)$. We stop once $\tau_i \geq d$, i.e., $g(\tau_i) \leq (1 - d)/\rho$. Thus, we consider an interval of length $\ell \coloneqq 1/(2\rho) - (1 - d)/\rho = (2d - 1)/\rho = \Theta(1/\rho)$, as $d > 1/2$ is a constant.

	We now argue how an update to $\tau_i$ is performed in order to estimate its expected value after an update, which is necessary in order to apply Theorem~\ref{thm:negDrift}. Consider, similar to the proof of Theorem~\ref{thm:oneMax}, that the bits of both offspring $x$ and $y$ for all positions but position $i$ have been determined. If the difference of the number of $1$s of both offspring is at least $2$, i.e., $\|x - y\|_1 - x_i + y_i \geq 2$, then the outcome of neither $x_i$ nor $y_i$ can change the outcome of the selection process. Thus, $\tau_i$ increases with probability $\tau_i(1 - \tau_i)$, as the winner offspring needs to sample a $1$ and the loser a $0$. Analogously, in this case, the probability that $\tau_i$ decreases is $\tau_i(1 - \tau_i)$, too.
	
	If the difference of the number of $1$s of both offspring is one, then, in order to increase $\tau_i$, the winner (with respect to all bits but bit $i$) needs to sample a $1$ and the loser a $0$, or the winner needs to sample a $0$, the loser a $1$, and the loser wins. The first case has a probability of $\tau_i(1 - \tau_i)$, the second of $(1/2)\tau_i(1 - \tau_i)$, due to the uniform selection when the offspring have equal fitness.
	In order to decrease $\tau_i$, the winner needs to sample a $0$, the loser a $1$, and the winner has to win, which has a probability of $(1/2)\tau_i(1 - \tau_i)$.
	
	If the difference of the number of $1$s of both offspring is zero, then $\tau_i$ is increased if any offspring samples a $1$ and the other samples a $0$. This has probability $2\tau_i(1 - \tau_i)$. In this case, it is not possible that $\tau_i$ is decreased.
	
	\newcommand*{\peqone}{p_{1}}
	\newcommand*{\peqzero}{p_{0}}
	In order to estimate the probabilities of when $\tau_i$ increases or decreases, we need to estimate the probabilities that the number of $1$s of both offspring differ by at least two, differ by exactly one, and differ by exactly zero. Let $\peqone$ denote the probability that this difference is one, and let $\peqzero$ denote the probability that the difference is zero. We now bound these probabilities.
	
	Assume that offspring $x$ has $k$ $1$s, where $k \in \{0\} \cup [n - 1]$, since we assume that bit $i$ has not been sampled yet. For $\peqzero$, $y$ needs to sample $k$ $1$s as well, and for $\peqone$, $y$ needs to sample $k - 1$ or $k + 1$ $1$s (such that the result is still in $\{0\} \cup [n - 1]$). Due to Lemma~\ref{lem:independentPoissonSum}, the probability for $y$ to have this many $1$s is $O(1/\sqrt{n})$, as we assume that all frequencies are in the interval $(1 - d, d) \subset [1/6, 5/6]$. Hence, by the law of total probability, we get
	\begin{align*}
		\peqzero = O\left(\tfrac{1}{\sqrt{n}}\right) \textrm{ and } \peqone = O \left(\tfrac{1}{\sqrt{n}}\right)\ .
	\end{align*}
	In the following, let $\gamma > 0$ be a constant such that $\peqzero \leq \gamma/\sqrt{n}$ and $\peqone \leq \gamma/\sqrt{n}$.
	
	We now consider the drift of $g(\tau_i)$ in any iteration $t$ such that $1/2 < \tau^{(t)}_i < d$, i.e., we show that condition~(\ref{item:negativeDrift}) of Theorem~\ref{thm:negDrift} holds. Let $\tau = \tau^{(t)}_i$ and $\tau' = \tau^{(t + 1)}_i$. Note that conditioning on $g(\tau)$ is the same as conditioning on $\tau$, as $g$ is injective. If $\tau$ increases, it changes by $\rho$, and if it decreases, it changes by $\rho + a$.
	\begin{align*}
		&E\left[g(\tau') - g(\tau) \ \bigg\vert\ g(\tau) \land \frac{1 - d}{\rho} < g(\tau) < \frac{1}{2\rho}\right]\\
		&= \tfrac{1}{\rho} E\left[\tau - \tau' \mid \tau \land \tfrac{1}{2} < \tau < d\right]\\
		&= \tfrac{1}{\rho}\Big((\rho + a)\left((1 - \peqzero - \peqone)\tau(1 - \tau) + \peqone\cdot\tfrac{1}{2}\tau(1 - \tau)\right)\\
		&\qquad -\rho\big((1 - \peqzero - \peqone)\tau(1 - \tau) + \peqone\cdot\tfrac{3}{2}\tau(1 - \tau)\\
		&\qquad\qquad +\ \peqzero\cdot 2\tau(1 - \tau)\big)\Big)\\
		&=\tfrac{1}{\rho}\tau(1 - \tau)\big((\rho + a)\left(1 - \peqzero - \tfrac{1}{2}\peqone\right) - \rho\left(1 + \peqzero + \tfrac{1}{2}\peqone\right)\big)\\
		&= \tfrac{1}{\rho}\tau(1 - \tau)\big(\rho(-2\peqzero - \peqone) + a\left(1 - \peqzero - \tfrac{1}{2}\peqone\right)\big)\ .
	\end{align*}
	For the negative terms with factor $a$, by using that $a \leq \rho$ and by applying the bounds on $\peqzero$ and $\peqone$, we get
	\begin{align*}
		&E\left[g(\tau') - g(\tau) \ \bigg\vert\ g(\tau) \land \frac{1 - d}{\rho} < g(\tau) < \frac{1}{2\rho}\right]\\
		&\geq \frac{1}{\rho}\tau(1 - \tau)\left(a - 3\peqzero\rho - \tfrac{3}{2}\peqone\rho\right)\\
		&\geq \frac{1}{\rho}\tau(1 - \tau)\left(a - 5\frac{\gamma}{\sqrt{n}}\rho\right)\ .
	\end{align*}
	Due to $a = \alpha\rho$, there is a sufficiently small constant $\beta > 0$ such that $a - 5\gamma\rho/\sqrt{n} \geq \beta\rho$. Thus, we get
	\begin{align*}
		E\left[g(\tau') - g(\tau) \ \bigg\vert\ g(\tau) \land \frac{1 - d}{\rho} < g(\tau) < \frac{1}{2\rho}\right] &\geq \beta\tau(1 - \tau)\\
		&\geq \beta\tfrac{1}{6}\cdot\tfrac{5}{6}\ ,
	\end{align*}
	which is constant.
	
	We now show that condition~(\ref{item:stepWidth}) of Theorem~\ref{thm:negDrift} holds. For this, we define $r(\ell) = 2$ and $\delta = \sqrt{2} - 1 > 0$. Note that $1 \leq r(\ell) = o\big(\ell/\log(\ell)\big) = o\big(1/(\rho\log(1/\rho))\big)$ holds, as $\rho = o(1)$. Since $\tau$ can change by at most $\rho + a \leq 2\rho$ during a single update, $g(\tau)$ can change by at most $2$. Thus, we only need to bound $\Pr[|g(\tau') - g(\tau)| \geq j \mid g(\tau) \land g(\tau) > (1 - d)\rho]$ for $j \in \{0, 1, 2\}$. For all of these three cases, $r(\ell)/\big((1 + \delta)^j\big) \geq 1$. Thus, condition~(\ref{item:stepWidth}) trivially holds for all $j \in \N$.
	
	Overall, by applying Theorem~\ref{thm:negDrift} and recalling that $\ell = \Theta(1/\rho)$, there are constant $c, c', c'' > 0$ such that
	\begin{align*}
		\Pr\left[T \leq 2^{\frac{c'\ell}{r(\ell)}}\right] = \Pr\left[T \leq 2^{\frac{c}{\rho}}\right] \leq 2^{-\Omega\big(\frac{1}{\rho}\big)} \leq n^{-c''}\ .
	\end{align*}
	Thus, w.h.p., $\tau_i$ does not reach $b$ within $2^{c/\rho}$ iterations, given that all frequencies are in $(1 - d, d)$. As discussed before, the probability of $\tau_i$ reaching $1 - d$ has at most the same probability. Note that conditioning on never sampling the optimum during any of these $t$~iterations increases these probabilities only by a factor of $1 - t(5/6)^n$, which is constant if $t = o(2^{\Theta(n)})$. Otherwise, we choose $2^{\Theta(n)}$ as run time bound. This concludes the proof.
\end{proof}

\section{Run Time Analysis for the Convex Search Algorithm}\label{sec:csa}

The following \emph{convex search algorithm} was proposed by Moraglio~\cite{DBLP:journals/ec/MoraglioS17}. Its sole parameter is a population size $\mu \in \N$. The algorithm starts with a first population of $\mu$ random individuals $x^{(1,1)}, \dots, x^{(1,\mu)} \in \{0,1\}^n$. In each iteration $t = 1, 2, \dots$, the algorithm generates from the current ``parent'' population $x^{(t,1)}, \dots, x^{(t,\mu)}$ a new ``offspring'' population $x^{(t+1,1)}, \dots, x^{(t+1,\mu)}$ as follows. 
\begin{itemize}
	\item If the parent population contains only copies of a single individual, the algorithm stops and outputs this solution. 
	\item If all individuals of the parent population have the same fitness, the offspring population is the parent population. 
	\item Otherwise, the individuals with lowest fitness value are removed from the parent population (giving the ``reduced parent population'') and the offspring population is obtained by $\mu$ times independently sampling from the convex hull of the reduced parent population. In other words, for all $i \in [n]$ and $j \in [\mu]$ independently, $x^{(t+1,j)}_i$ is chosen randomly from $\{0,1\}$ if the reduced parent populations contains both an individual having a $0$ at the $i$-th position and an individual having a $1$ at this position. If all individuals of the reduced parent population have the same value $b \in \{0,1\}$ in the $i$-th position, then $x^{(t+1,j)}_i \coloneqq b$.
\end{itemize}

The convex search algorithm with $\mu \ge 8 \log_2(4n^2 + n)$ and a suitable restart strategy was shown to optimize the \LO problem in expected time $O(n \log n)$~\cite{DBLP:journals/ec/MoraglioS17}. We now show that its performance on the \OM problem is not very attractive, namely it is asymptotically larger than any polynomial even when employing a suitable restart strategy. We suspect that much stronger lower bounds hold, but given the only moderate general interest in this algorithm so far, we restrict ourselves to this super-polynomial lower bound.

\begin{theorem}\label{thm:csa}
	Let $c > 0$. Regardless of the population size, a run of the convex search algorithm on \OM with probability at least $1 - O(n^{-c})$ 
	\begin{itemize}
		\item either reaches a state from which the optimum cannot be found,
		\item or within $n^c$ iterations does not fix any bit-position.
	\end{itemize}
	Consequently, at least $\Omega(n^c)$ iterations are necessary to find the optimum.
\end{theorem}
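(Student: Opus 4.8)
The plan is to follow the fate of the \emph{fixed} bit-positions, where position $i$ is fixed at some point in time if all $\mu$ current individuals agree at position $i$. A fixed position stays fixed forever, since the convex-hull sampling copies the common value; and a position fixed to $0$ makes the optimum (the all-$1$s string) unreachable. Hence it suffices to show that with probability $1-O(n^{-c})$, either some position gets fixed to $0$ within $n^c$ iterations, or no position gets fixed at all within $n^c$ iterations. The decisive structural fact I would use is that \emph{as long as no position has been fixed, the parent population at the start of every iteration is distributed as $\mu$ independent uniform samples from $\{0,1\}^n$}: the offspring are generated by independent fair coin flips at every position precisely when the preceding reduced parent population still carries both bit values at every position, which is exactly what ``no position fixed'' means; and if at some iteration all $\mu$ individuals happen to share the same fitness, the algorithm freezes (offspring $=$ parents forever) and then no position is ever fixed, which is already the second alternative. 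So it is enough to analyse one fresh iteration: take $\mu$ i.i.d.\ uniform bit-strings, discard those of minimum \OM-value, and bound the probability that the survivors all agree at some position.

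First I would dispose of small population sizes. If $\mu \le \log_2 n - (2c+2)\log_2\log_2 n$, the statement is essentially immediate: each position is all-zero in the initial random population $P^{(1)}$ independently with probability $2^{-\mu}$, so the probability that no position is all-zero is $(1-2^{-\mu})^n \le \exp(-n\,2^{-\mu}) \le \exp\!\big(-(\log_2 n)^{2c+2}\big) = n^{-\omega(1)}$; hence with probability $1-o(n^{-c})$ the population $P^{(1)}$ already contains a position fixed to $0$ and the optimum is unreachable (the first alternative). The tiny cases $\mu\le 3$ are even more blatant.

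For large $\mu$ — say $\mu \ge Cc\log_2 n$ for a suitable absolute constant $C$ — I would show that no position is ever fixed within $n^c$ iterations. Since $\mathrm{Bin}(n,1/2)$ puts mass only $O(1/\sqrt n)$ on any single value (Lemma~\ref{lem:independentPoissonSum}), with high probability at most $O(1)$ of the $\mu$ i.i.d.\ strings tie for the minimum \OM-value, so at least $k\ge \mu-O(1)\ge \tfrac34\mu$ individuals survive the selection of one iteration; also $P^{(1)}$ itself has no fixed position with probability $1-O(n^{-(2c+1)})$. The key, bias-proof observation is that \emph{if all $k$ survivors agree at position $i$, then at least $k$ of the original $\mu$ strings carry that value at position $i$} — selection only reorders strings, it does not alter their bits. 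Using $k>\mu/2$ and the additive Chernoff bound (Theorem~\ref{thm:additiveChernoff}), the probability that a given position becomes fixed in a given iteration is therefore at most $2\Pr[\mathrm{Bin}(\mu,1/2)\ge k] = e^{-\Omega(\mu)} \le n^{-(2c+1)}$, and a union bound over the $n$ positions and the $n^c$ iterations gives that with probability $1-O(n^{-c})$ no position is fixed within $n^c$ iterations (the second alternative). The ``$\Omega(n^c)$ iterations'' conclusion then follows: in the first case the optimum is forever unreachable, and in the second case every position still carries both bit values throughout, so each of the at most $\mu n^c$ sampled offspring equals the all-$1$s string with probability $2^{-n}$, which is $o(1)$ over the whole run.

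The hard part is the selection bias toward $1$s: the surviving population is skewed towards $1$ at every position, and one has to make sure this does not make ``all survivors agree at position $i$'' much more likely for the value $1$ than the symmetric value $2^{-k}$ (this is handled cleanly by the counting inequality above, which throws away the bias entirely) and, in the delicate intermediate range $\mu=\Theta(\log n)$ that falls between the two cases above, that a bit does not get fixed to $1$ while $0$-fixings are still rare — here one must check that the bias perturbs the per-position fixing probability only by a factor $1+o(1)$, so that a bit which does get fixed is fixed to $0$ with probability bounded below by a constant, and then argue that over $n^c$ iterations this forces a $0$-fixing whenever any fixing occurs. A secondary technical point is controlling the number $k$ of survivors, which could a priori be small if many strings tie for the minimum \OM-value; this is exactly what the $O(1/\sqrt n)$ point-mass bound for $\mathrm{Bin}(n,1/2)$ rules out. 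Stitching the small-$\mu$, intermediate, and large-$\mu$ regimes together with the ``equal-fitness'' freeze then yields the theorem.
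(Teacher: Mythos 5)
Your small-$\mu$ and large-$\mu$ regimes are essentially sound, and the counting observation in the large-$\mu$ case (``if all $k$ survivors agree at position $i$, then at least $k$ of the $\mu$ original uniform strings agree there'') is a nice, arguably cleaner route than the paper's argument via the conditional per-bit probabilities of the above-median individuals. Two smaller points there: the intermediate claim ``at most $O(1)$ individuals tie for the minimum'' fails with probability $\Theta(\mathrm{poly}\log n/n)$ per iteration (three-way fitness collisions), which does not survive the union bound over $n^c$ iterations for $c\ge 1$; you must instead use the weaker, sufficient event ``fewer than $\mu/4$ ties'', whose failure probability is $n^{-\Omega(\mu)}$. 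Also, your structural fact is only correct with a caveat: conditioned on ``no position fixed so far'', the population is $\mu$ i.i.d.\ uniform strings \emph{conditioned on having no constant column} (and, once some positions are fixed to $1$, i.i.d.\ uniform only on the free positions); this is easy to repair but matters below.

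The genuine gap is the intermediate regime $\mu=\Theta(\log n)$, which is exactly where the paper invests all its machinery (the artificial process in which bits cannot converge to $0$, the ``unsafe'' positions, the events $B_s$ about $\Delta$-fold fitness collisions, and the domination/comparison step showing that among the many safely random column samplings one is all-zeros with overwhelming probability). Your sketch replaces this by two asserted claims: that selection changes the per-position fixing probability only by a factor $1+o(1)$, and that ``over $n^c$ iterations this forces a $0$-fixing whenever any fixing occurs.'' The second claim is not a routine consequence of the first and does not by itself give the required $1-O(n^{-c})$ bound: knowing that each fixing is a $0$-fixing with probability bounded below by a constant only bounds the bad event by a constant unless you additionally show that, once any fixing has occurred, many further fixings must occur (possibly far beyond the $n^c$-iteration window, since the first alternative has no time bound), and you must then control the process in states where some positions are already fixed to $1$ -- there the population is uniform only on the $n'$ free positions, the per-fixing bias is of order $\mu\sqrt{\log\mu/n'}$ and is no longer $o(1)$ once $n'$ is polylogarithmic, the population may freeze, and the $(1+o(1))$ errors compound over up to $n$ fixings. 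Moreover, the fairness claim itself requires a careful treatment of the dependence between a survivor's bit at position $i$ and the selection event (e.g.\ via conditioning on the fitness values and concentration of the order statistics), under the additional conditioning on the no-fixing history; these are precisely the ``dependencies inflicted from restricting the parent population to all but the lower fitness level'' that the paper flags as the difficulty and removes by pessimistic estimates and domination. So the plan is plausible and could likely be completed, but as written the hardest part of the theorem is assumed rather than proved.
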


While the result may seem natural, proving it is made difficult by the dependencies inflicted from restricting the parent population to all but the lower fitness level. We remove these dependencies by suitable pessimistic estimates (e.g., estimating that a position does not become fixed to $1$ when at least $\Delta$ zeros are sampled), by suitable domination arguments (cf.~\cite{Doerr19StochasticDomination}), and by first regarding an artificial process in which bits can only be fixed to $1$.

\begin{proof}[Proof of Theorem~\ref{thm:csa}]
	Since we are aiming at an asymptotic statement, we assume in the following that $n$ is sufficiently large.
	
	To ease the following proof, we first argue that only the case $\mu = \Theta(\log n)$ is interesting. If $\mu \le \frac 12 \log_2 n$, then with probability 
	\[1 - (1 - 2^{-\mu})^n \ge 1 - \exp(-2^{-\mu} n) \ge 1 - \exp(-n^{1/2}),\] at least one of the bit-positions of the initial population is already converged to zero. Let now $\mu \ge K \log_2 n$ for a sufficiently large constant $K$ (which may depend on the constant $c$). We show that a random population with probability $n^{-2c}$ fixes no bit (that is, the next population is again fully random). By elementary properties of the binomial distribution, with probability at least $1 - (2/3)^{\mu}$, the lowest fitness value of the random population is below $n/2$. Since the probability of having a fitness of at least $n/2$ is at least $\frac 12$, the additive Chernoff bound (\cite[Theorem~10.7]{2018arXiv180106733D}) gives that with probability at least $1 - \exp(-\mu/8)$, the number $\mu^+$ of individuals having a fitness of at least $n/2$ is at least $\mu/4$. 
	
	We now condition on $\mu^+ \ge \mu/4$ and that there is an individual with fitness less than $n/2$ (and recall that this event happens with probability at least $1 - (2/3)^{\mu} - \exp(-\mu/8)$). We first note that in this case the $\mu^+$ individuals with fitness $n/2$ or more surely belong to the reduced population, which defines the next population. For each of these $\mu^+$ individuals and for each of their bit-positions, the probability to have a one is between $1/2$ and $3/4$, since these individuals are random individuals conditional on having at least $n/2$ ones. Since these individuals are stochastically independent, the probability that a bit-position in all these $\mu^+$ individuals has the value $0$ is at most $(1/2)^{\mu^+} \le (1/2)^{\mu/4}$ and the probability is at most $(3/4)^{\mu^+} \le (3/4)^{\mu/4}$ for the event that they are all one. 
	
	In summary, we obtain that the probability that a bit-position becomes fixed is at most $(2/3)^\mu + \exp(-\mu/8) + (1 - (2/3)^\mu - \exp(-\mu/8))n ((1/2)^{\mu/4}+(3/4)^{\mu/4})$. By taking the constant $K$ in the lower bound for $\mu$ sufficiently large, this probability is at most $n^{-2c}$. Hence a union bound over $n^c$ iterations shows that within this time frame, with probability $1 - n^{-c}$ no bit-position becomes fixed.
	
	In the remainder we thus assume that $\mu = \Theta(\log n)$ with implicit constants depending on the constant $c$ only. 
	
	We first regard the artificial random process which equals a true run of the CSA on \OM except that for all bits $i \in [n]$ where the reduced parent population contains only individuals with bit-value $0$ we still sample the offspring bits randomly from $\{0,1\}$. In other words, we prevent the algorithm from letting a bit-value converge to the wrong value of $0$. 
	
	Let $\Delta = \lceil 2 + 2c + 2\frac{\mu}{\log_2 n} \rceil$ and $t = \left\lfloor \frac{2^\mu}{4 (2\mu)^\Delta} \right\rfloor$.
	We call a bit-position $i \in [n]$ at time $s \in [t]$ \emph{unsafe} if the population at time $s$ contains at least $\mu - \Delta$ ones in this bit-position, that is, if $\sum_{j = 1}^\mu x^{(s,j)}_i \ge \mu - \Delta$, and if this bit-position was determined by sampling random bit-values (that is, not by setting all bit-values to one because the previous reduced population was converged to $1$ in this position). Let $X_{is}$ be the indicator random variable for this event. 
	
	We easily see that 
	\begin{align*}
	\Pr[X_{is}=1] = \Pr[\Bin(\mu,\tfrac 12) \ge \mu - \Delta] &\le 2^{-(\mu-\Delta)} \binom{\mu}{\mu-\Delta}\\
	&\le 2^{-\mu} (2\mu)^\Delta\ ,
	\end{align*}
	where the estimate for the binomial distribution is well-known (see~\cite[Lemma~3]{GiessenW17} or~\cite[Lemma~10.37]{2018arXiv180106733D}). 
	
	Regarding the correlation of the $X_{is}$, we see that either $X_{is}$ is a fresh random sample independent from all $X_{i's'}$ with $s' < s$ and $i' \in [n]$ or, namely if the reduced parent population in iteration $s$ has the $i$-th bit converged, $X_{is}=0$ with probability one. Consequently, the number $X \coloneqq \sum_{s=1}^t \sum_{i=1}^n X_{is}$ of unsafe bit-positions in the time frame $[t]$ is dominated by a sum of $nt$ independent Bernoulli random variables with success probability $2^{-\mu} (2\mu)^\Delta$, see~\cite[Lemma~11]{DoerrJ10} or~\cite[Lemma~10.22]{2018arXiv180106733D}.
	
	For these reasons, we have $E[X] \le \frac 14 n$ and $\Pr[X \ge \frac 12 n] \le \exp(-\frac 1 {8} n)$ by the additive Chernoff bound. 
	
	We now argue that having at least $\Delta$ zeros in some bit-position is often enough sufficient for the position not being fixed to $1$. For each $s \in [t]$, let $B_s$ be the following event. 
	\begin{itemize}
		\item If in the sampling process of the $s$-th population at least $\frac n2$ bit-positions are not already fixed (``fat $s$-th population''), then $B_s$ is the event that there is a fitness value $z \in [0..n]$ such that at least $\Delta$ individuals of the $s$-th population have fitness exactly $z$. 
		\item Otherwise (``thin $s$-th population'') let the event $B_s$ be true with probability $p \coloneqq n^{1-\Delta/2} (2\mu)^\Delta$ independent of all other random decisions of the algorithm. 
	\end{itemize}
	Since a random variable with binomial distribution with parameters $n$ and $\frac 12$ attains each value in $[0..n]$ with probability at most $2 /\sqrt{n}$, this follows from elementary estimates of binomial coefficient, see, e.g., \cite[Lemma~4.9]{2018arXiv180106733D}, a union bound over the $n+1$ possible values of $z$ and a similar estimate as above shows 
	\[\Pr[B_s] \le (n+1) \binom{\mu}{\Delta}2^\Delta n^{-\Delta/2} \le n^{1-\Delta/2} (2\mu)^\Delta = p\] 
	also in the first case, where the last estimate exploits that $\Delta \ge 2$. 
	
	Denote by $\overline B = \bigwedge_{s = 1}^t \overline B_s$ the event that none of the $B_s$ comes true. By a simple union bound, 
	\[\Pr[(X \le \tfrac n2) \wedge \overline B] \ge 1 - \exp(-\tfrac n8) - tp \ge 1 - \exp(- \tfrac n8) - \tfrac 14 n^{-c}.\]
	A simple induction shows that the event ``$(X \le \frac n2) \wedge \overline B$'' implies that in each iteration $s \in [2..t]$ at most those bit-positions which have been unsafe before can be converged to one. For $s=2$ this follows from the fact that all positions of the initial population are sampled randomly; consequently, the event $\overline B_2$ means that all fitness values occurred less than $\Delta$ times. This, however, implies that only a bit-position $i$ which was unsafe in the first iteration (that is, $X_{i1}=1$) can be converged in the second population. Since the total number of unsafe positions is at most $n/2$, also the second population is fat, that is, contains at least $n/2$ random bits. Repeating the previous arguments, we obtain that at most bit-positions which where unsafe at least once can be converged to one, and further, that all populations up to time $t$ are fat.
	
	Conditioning on the event ``$(X \le \frac n2) \wedge \overline B$'', we now regard the difference between the artificial process and a true run of the CSA. We have just seen that during the run of the artificial process, at least $tn/2$ times a bit-position was sampled randomly (without becoming unsafe). The number of ones in such a bit-position is described by a random variable $(Z \mid Z \le \mu-\Delta)$, where $Z$ follows a binomal law with parameters $\mu$ and $\tfrac 12$. In particular, with probability at least $2^{-\mu}$, this number is zero. Note that when a bit-position is sampled with zero ones, then the true process differs from the artificial process and the run of the CSA reaches a state from which it cannot generate the optimum of \OM. The probability that none of the at least $tn/2$ safe samplings of variables leads to this negative event is at most
	\begin{align*}
	&(1-2^{-\mu})^{tn/2} \le \exp\left(- \frac{2^{-\mu} t n}{2}\right)\\
	&\qquad\le \exp\left(- \frac{2^{-\mu}n}{2} \frac{2^\mu}{2\cdot 4(2\mu)^\Delta}\right) = \exp\left(-\frac{n}{16 (2\mu)^\Delta}\right).
	\end{align*}
	
	In summary, we see that with probability at least 
	\begin{align*}
	&\left(1 - \exp(-\tfrac n8) - \tfrac 14 n^{-c}\right) \left(1-\exp\left(- \frac{n}{16 (2\mu)^\Delta}\right)\right)\\
	&\hspace*{18 em}= 1-O(n^{-c}),
	\end{align*}
	the run of the true CAS fixes a position to zero.  
\end{proof}

\section{Conclusions}
\label{sec:conclusions}

We introduced the novel EDA \sigcGA, which optimizes both \OM and \LO in $O(n \log n)$ w.h.p. and in expectation. This is the first result of this kind for an EDA or even an EA. These run times are a result of the update process of the \sigcGA: it only updates its probabilistic model if it finds a significance in the history of its samples. In contrast, common EDAs or EAs that are analyzed theoretically do not store the entire history of their samples; EAs keep some samples as their population, and EDAs learn from samples iteratively and store the gained information implicitly in their model.

Since storing the entire history of samples demands a lot of memory if the \sigcGA runs longer, we proposed a method that stores the history compactly while maintaining its important information. We want to note that this method can be improved even further. Currently, the \sigcGA saves new data in each iteration, even if no information is gained. In order to further reduce the memory demands of the \sigcGA, it should save a bit only if it is different from that of the competing offspring, that is, if there actually was a bias in both offspring at that position. Note that this is more similar to how the cGA updates its frequencies. However, if a frequency of the \sigcGA is at~$1/2$, the number of samples that can contain important information (that is the pairs $(0, 1)$ and $(1, 0)$) is, in expectation, only half the number of all samples. Thus, the memory is only reduced by roughly a factor of~$2$.

All in all, the approach of the \sigcGA to reduce run times for a slight increase in memory appears to pay off very well.
In this first work, as often in the theory of evolutionary algorithms, we only regarded the two unimodal benchmark functions \OM and \LO. Since it has been observed, e.g., recently in~\cite{DoerrLMN17}, that insights derived from such analyses can lead to wrong conclusions for more difficult functions, an interesting next step would be to analyze the performance of the \sigcGA on objective functions that have true local optima or that have larger plateaus of equal fitness. Two benchmark functions have been suggested in this context, namely jump functions~\cite{DBLP:journals/tcs/DrosteJW02} having an easy to reach local optimum with a scalable basin of attraction and plateau functions~\cite{AntipovD18} having a plateau of scalable diameter around the optimum. We are vaguely optimistic that our \sigcGA has a good performance on these as well. We expect that the \sigcGA, as when optimizing \OM, quickly fixes a large number of bits to the correct value and then, different from classic EAs, profits from the fact that the missing bits are sampled with uniform distribution, leading to a much more efficient exploration~of the small subhypercube formed by these undecided bits. Needless to say, transforming this speculation into a formal proof would be a significant step forward to understanding the \sigcGA.

From a broader perspective, our work shows that by taking into account a longer history and only updating the model when the history justifies it, the performance of a classic EDA can be improved and its usability can be increased (since the difficult choice of the model update strength is now obsolete). An interesting question from this viewpoint would be to what extent similar ideas can be applied to other well-known EDAs.

From a very broad perspective, our work suggests that generally EC could profit from enriching the iterative evolutionary process with mechanisms that collect and exploit information over several iterations. So far, such learning-based concepts are rarely used in EC. The only theoretical works in this direction propose a history-based choice of the mutation strength~\cite{DoerrDY16ppsn} and analyze hyperheuristics that stick to a chosen subheuristic until its performance over the last $\tau$ iterations, $\tau$ a parameter of the algorithms, appears insufficient (see, e.g.,~\cite{DoerrLOW18} and the references therein).

\bibliographystyle{ACM-Reference-Format}
\bibliography{AdaptiveEDAs}

\end{document}